\newtheorem{theorem}{Theorem}
\newtheorem{corollary}{Corollary}
\theoremstyle{definition}
\newcommand\BibTeX{{\rmfamily B\kern-.05em \textsc{i\kern-.025em b}\kern-.08em
		T\kern-.1667em\lower.7ex\hbox{E}\kern-.125emX}}
\newcommand{\bX}{{\mathbf X}}
\newcommand{\bI}{{\bf I}}
\newcommand{\bSigma}{{\mathbf{\Sigma}}}
\newcommand{\bmu}{{\mbox{\boldmath${\mu}$}}}
\newcommand{\mC}{{\mathcal{C}}}
\newcommand{\mD}{{\mathcal{D}}}
\newcommand{\bG}{{\mathbf{\Gamma}}}
\newcommand{\bL}{{\mathbf{\Lambda}}}
\newcommand{\bxi}{{\mbox{\boldmath${\xi}$}}}
\newcommand{\bgamma}{{\mbox{\boldmath${\gamma}$}}}
\newcommand{\mR}{{\mathcal{R}}}
\newcommand{\norm}[1]{\left\lVert#1\right\rVert}
\begin{document}
\title{Merging $K$-means with hierarchical clustering for 
  identifying general-shaped groups}
\author{Anna D. Peterson\affil{a}, Arka P. Ghosh\affil{a} and
  Ranjan Maitra\affil{a}\corrauth}
\address{%
\affilnum{a}
Department of  Statistics, Iowa State University, Ames, Iowa, USA}

\corremail{maitra@iastate.edu}

\received{16 November 2017}
\accepted{28 November 2017}
%\date{}
%\renewcommand{\baselinestretch}{1.4}
%\normalsize

\begin{abstract}
Clustering partitions a dataset 
such that observations placed together in a group are similar but 
different from those in other groups. Hierarchical and
$K$-means clustering are two approaches but have different strengths
and weaknesses.  For instance, hierarchical clustering identifies
groups in a tree-like structure but suffers from computational
complexity in large datasets while  $K$-means clustering 
is efficient but  designed to identify homogeneous
spherically-shaped clusters.  We present a
hybrid non-parametric clustering approach that amalgamates the two
methods  to identify
general-shaped clusters and that can be applied to larger
datasets. Specifically, we first 
partition the dataset into spherical groups using $K$-means.
We next merge these groups using hierarchical methods with a
data-driven distance measure as a stopping criterion.  Our proposal
has the potential to reveal groups with general shapes and structure
in a dataset. We demonstrate good performance on several simulated
and real datasets.  
\end{abstract}
\keywords{$K$-means algorithm; hierarchical clustering; single
  linkage; complete linkage; distance measure} 
\maketitle

\section{Introduction} \label{intro}

% {\bf General history of Clustering and why performance evaluation is important: }\\
Clustering partitions a dataset into subsets called clusters without any
prior knowledge of group assignment.   The 
general objective is that observations placed in the same cluster are
similar in some sense while being different to those in other groups.
The substantial body of literature
\citep{everittetal01,fraleyandraftery02,hartigan85,kaufmanandrousseuw90,kettenring06,melnykovandmaitra11,mclachlanandbasford88,murtagh85,ramey85}
dedicated to the topic reflects the difficulty and diversity of
clustering applications. Most unsupervised clustering techniques are
broadly hierarchical or partition-optimization-based.  Traditionally,
hierarchical algorithms  provide a tree-like
structure for demarcating groups, with the property that all
observations in a group at some branch are also in the same group higher up the
tree. 
Hierarchical algorithms may be
agglomerative (cluster-merging) or divisive (cluster-breaking).
Agglomerative algorithms successively merge smaller clusters together
whereas divisive algorithms successively break larger clusters apart.
Most hierarchical clustering methods use some  dissimilarity measure
between groups to decide whether to merge (or split)
groups. The result can be represented as a dendrogram that can
visually express the data structure.  Generally, a
linkage criteria specifies the dissimilarity between each branch of
the dendrogram as a function of the pairwise distances of observations
in the sets.  The linkage criterion can influence cluster shapes: for
example, single linkage is commonly associated with
stringy groups while Ward's linkage is more commonly used for
spherical clusters \citep{johnston07}. Although the nesting structure
provides a broad understanding of the relationships between
observations within a dataset, clusters lose homogeneity at higher
branches of the tree.  Further, hierarchical clustering requires
calculating all pairwise distances between observations which is 
computationally expensive in 
processor speed (and, more so, in memory) for larger datasets. 
%This quadratic complexity poses problems that cannot be overcome by faster computers.   

Partitional clustering, on the other hand, directly
divides a dataset into groups, so that the
data in each subset (ideally) share some common trait.
Typically the algorithm involves minimizing some measure of
dissimilarity between observations within each cluster, while maximizing the
dissimilarity between observations in different clusters. The
$K$-means algorithm is a very popular choice even though more formal
approaches are provided by model-based clustering~\citep{fraleyandraftery02,mclachlanandpeel00,melnykovandmaitra11}.  The $K$-means algorithms 
minimizes the within group sum-of-squares and can be implemented
efficiently~\citep{hartiganandwong79}. But $K$-means requires the
number of groups ($K$) to be provided or alternatively decided from the
data~\citep{maitraetal12}. Further, different initialization 
strategies often produce strikingly different 
groupings. Also, the algorithm is not as successful with groups that
do not have the same and spherical dispersion structure.
% In this paper we seek to produce a hybrid of the two methods taking
% several positive aspects of the $K$-means and hierarchical
% clustering algorithms while discarding some of their deficiencies.   

\begin{figure*}[!h]
  \centering
  \hspace{-0.1in}
  \mbox{
    \hspace{-0.1in}
    \subfloat[$K$-means, $K$ = 2]{
      \includegraphics[totalheight=2.2in,width=2.2in]{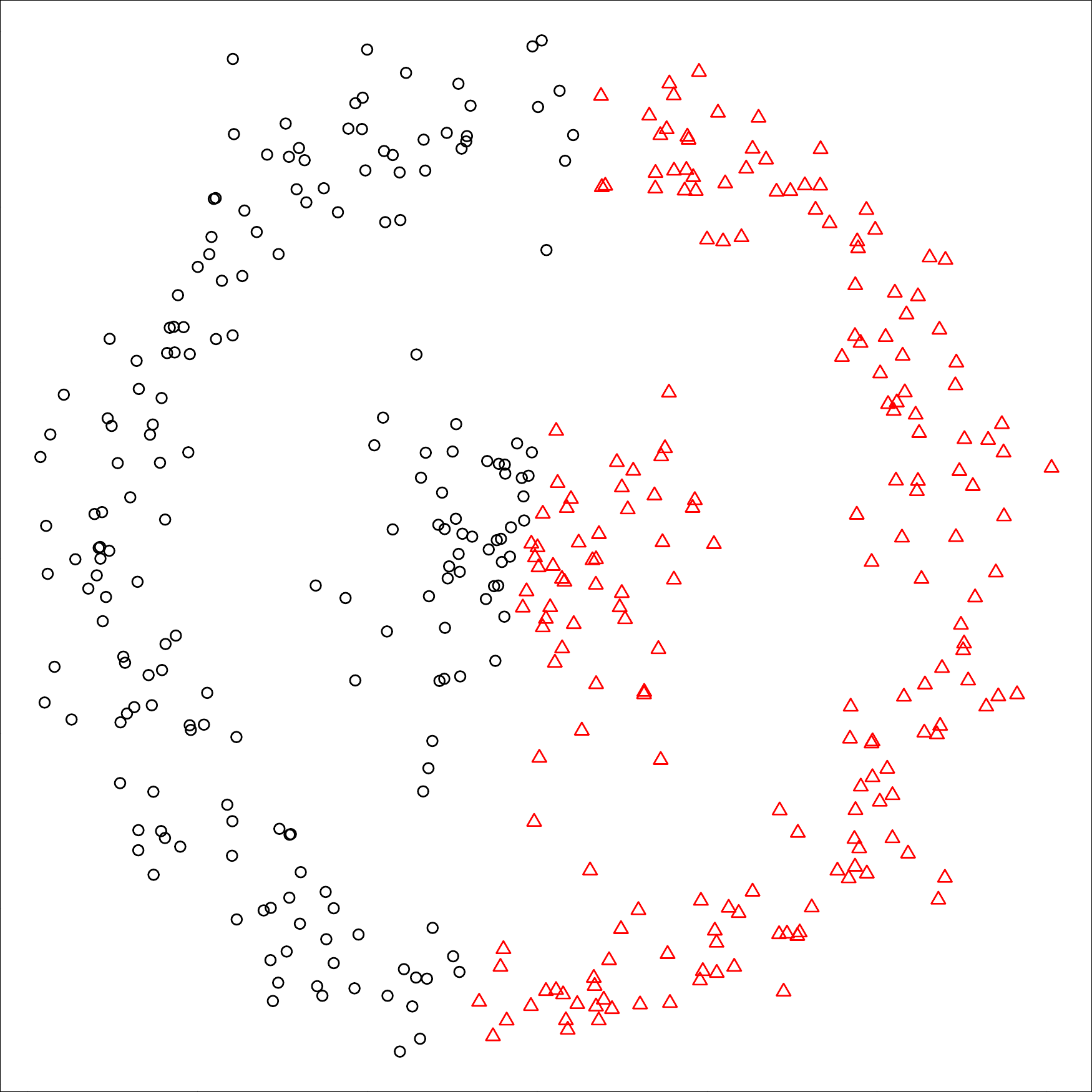}}
    \subfloat[$K$-means, $K$ = 6]{
      \includegraphics[angle=0,totalheight=2.2in,width=2.2in]{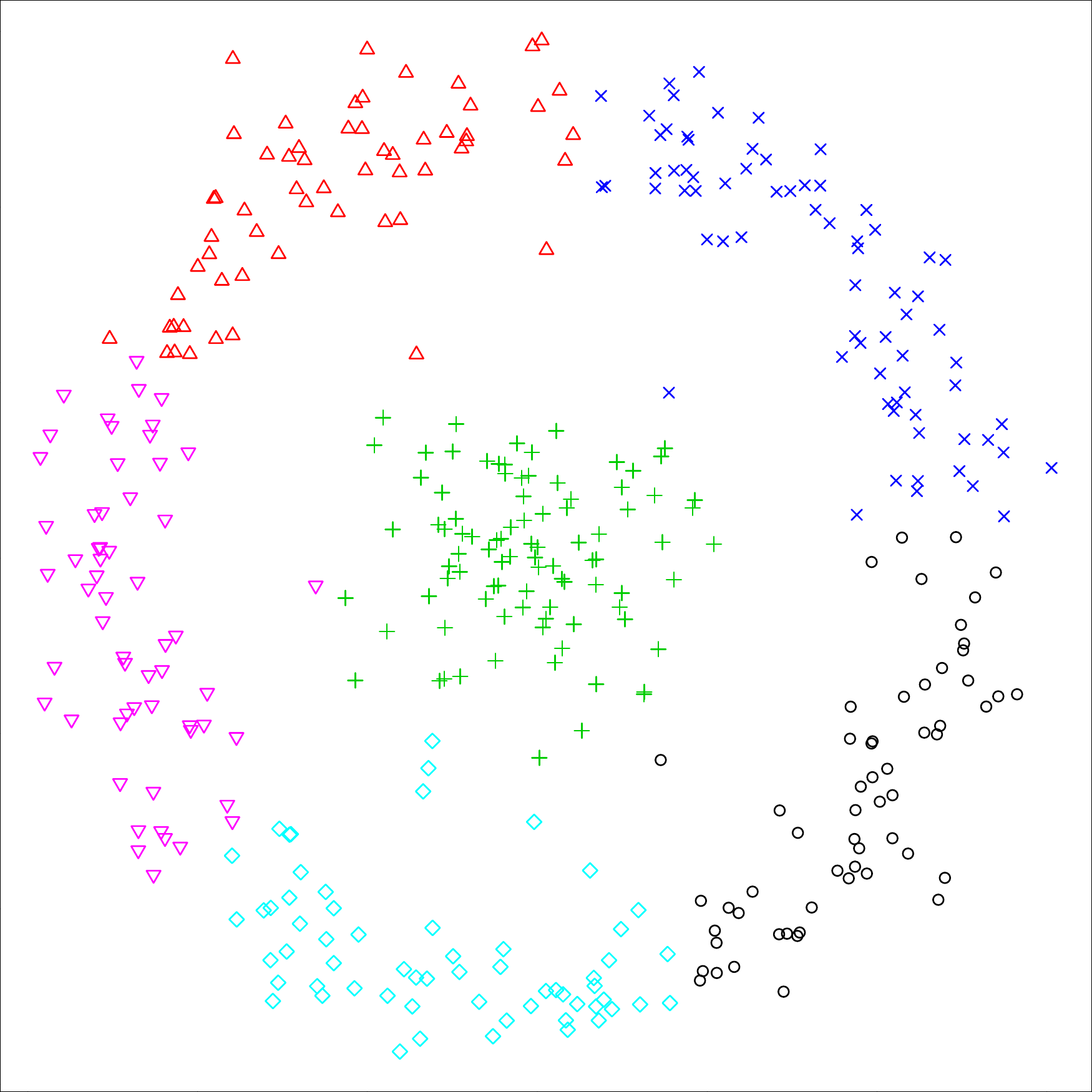}}
    \subfloat[HC, $K$ = 2, single linkage]{
      \includegraphics[angle=0,totalheight=2.2in,width=2.2in]{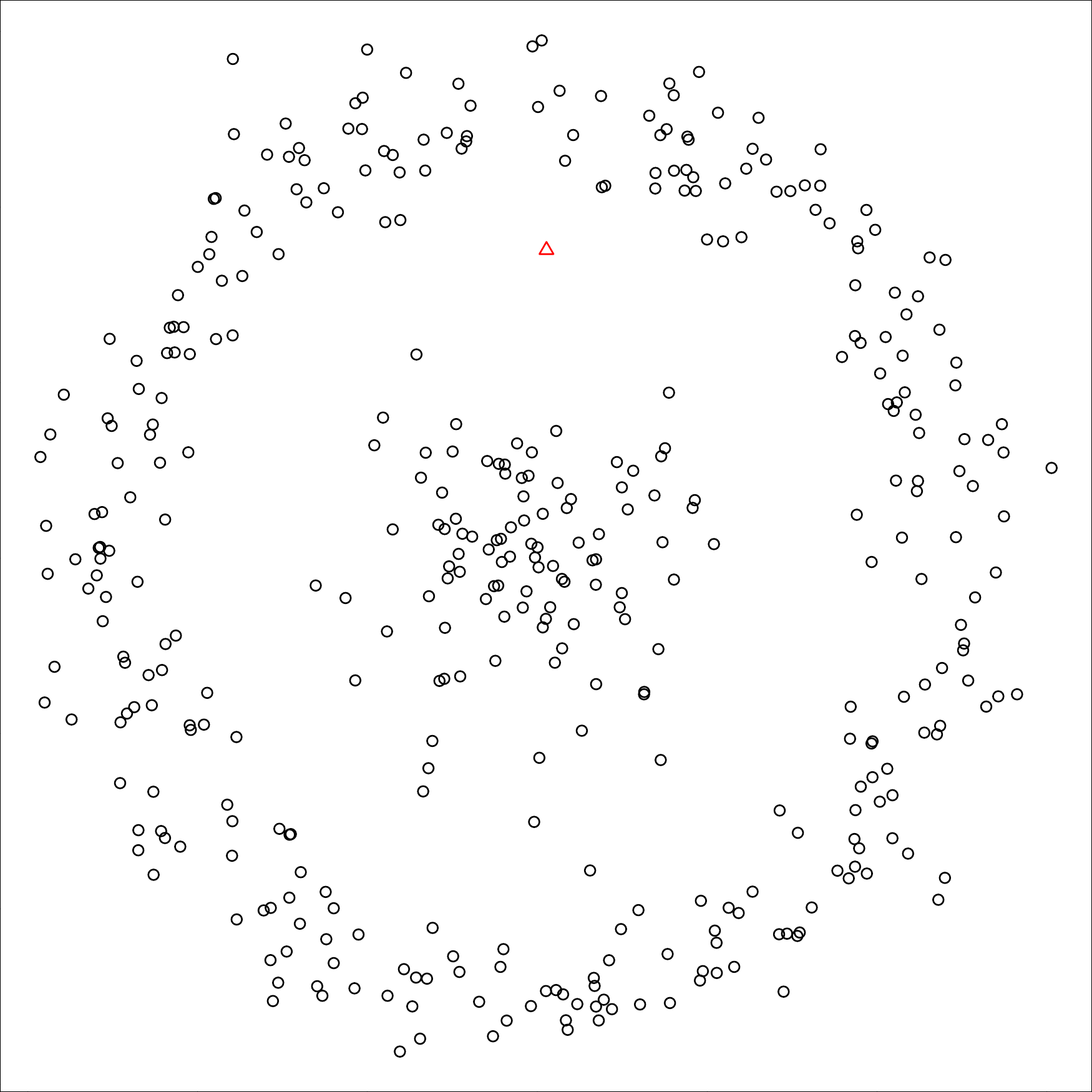}}}
  % \hrule
  \caption{Partition (using both color and symbol) of the Bullseye
    dataset with (a) 2-means, (b) 6-means and (c) single linkage
    hierarchical clustering with 2 groups.   } 
  \label{bull}
\end{figure*} 
We illustrate some shortcomings of these algorithms through the
Bullseye dataset of~\citet{nugent10} which has 400 observations from a
spherical cluster surrounded by a 
ring of observations (which form the second group).
Figure \ref{bull}(a-b) shows the clustering using 2- and $6$-means.
In addition, Figure \ref{bull}(c) shows the grouping 
based on hierarchical clustering with single linkage and $K$ = 2.
Neither approach clusters into their true groupings. Although Figure
\ref{bull}(b) captures the center group, we required 5 groups to
create the outer ring. One possibility of improving this solution is
to merge these groups using some objective mechanism and we will
explore this approach in this paper. 

% However, Figure \ref{bull}(a) suggests possibly merging tha new
% method which uses $K$-means clustering on a large number of groups for which we then merge closer groups together.   

The idea of merging clusters is not new in the literature.
\citet{fred05} introduced evidence accumulation clustering (EAC) for
combining the results from multiple applications of $K$-means.  The
idea behind EAC is that each partition gives 
independent evidence on the organization of the data. The authors
proposed independent runs of $K$-means on the dataset and created a
similarity (frequency) matrix between all pairs of data points with
the $(i,j)$th entry representing the number of times the $i$th and
$j$th observations were placed in the same group.  The
final data partition is  obtained by applying a hierarchical
agglomerative clustering algorithm using this similarity matrix.  The
motivation here is that observations that are together in
the majority of partitions should also be so in the final
chosen partition.  This procedure is novel in that it chooses among
several different partitions but it is computationally expensive since
it involves performing either single linkage or average
linkage on an $n\times n$ distance matrix, where $n$ is the number of
observations.  \citet{nugent10} adopt a nonparametric approach to clustering based on the premise that groups correspond to modes of the density.  \citet{nugent10} find the modes within a dataset and assign observations to the ``domain of attraction" of a mode.  The collection of high density modes is used to create a hierarchical structure where dissimilarity between modes is based on the lowest density observed between any pair of groups.  
\citet{baudryetal08} propose a cluster merging method using a
model-based clustering approach. They propose first selecting the
total number of Gaussian mixtures components, $K_0$, using BIC and
then combining them hierarchically.  This yields a unique soft
clustering for each $K$ less than $K_0$. Further refinements to this
method were provided by the DEMP~\citep{hennig10} and 
DEMP+~\citep{melnykov16} algorithms. However, model-based clustering
is computationally slower and typically more difficult to apply on to
larger datasets.  

% Other authors \citep{chen05, bocker04} have derived algorithms combining the hierarchical clustering and the $K$-means clustering algorithms.   \citet{chen05}'s method involves first performing hierarchical clustering and using the results from hierarchical clustering to initialize the $K$-means algorithm.  \citet{bocker04}'s creates a hierarchical tree of solutions based on the $K$-means algorithm.  Here they start with $K = 2$ and obtain the clustering using the $K$-means algorithm.  Then they split these clusters in half by performing the $K$-means algorithm with $K = 2$ on each of the first two clusters.  This will produce four groups.  The process is repeated until there are no further groups to break apart.   Although these two methods and our approach have similar algorithm names we stress that each of these methods take a very different approach.

In this paper, we propose a $K$-means hierarchical  ($K-mH$) 
 cluster merging algorithm which combines the computational benefits
 of  $K$-means with agglomerative hierarchical clustering. The general
 methodology and our algorithm are detailed in Section \ref{method}.
We present several examples of datasets with clusters of 
complicated/general shapes in Section
\ref{examples} to illustrate and evaluate our algorithm.  We end with
a short discussion.

% Despite the vast literature on clustering algorithms, there is no
% method that uniformly outperforms every other algorithm. Many
% clustering algorithms perform well in specific settings but work differently in other settings. More importantly, in many cases, %the settings where
% algorithms work well or poorly is not quite understood. 

\section{Methodology}
\label{method}
%{\color{blue} need to come back to this}
Let ${\mathcal S} =
\{\bX_{1},\bX_{2},\ldots,\bX_{n}\}$ be a dataset of $n$
$p$-dimensional observations that are presumed to be in a partition
$P$ comprising defined categories $\mC_1, \mC_2,\ldots,\mC_K$
according to some similarity measure between observations.  
%function measuring the quality of fit is minimized. 
%Different clustering algorithms often produce different partitions for the exact same dataset.  In addition, the same clustering algorithm can produce strikingly different partitions of a dataset when the algorithm uses different initializations.  
Suppose we have  $N$ such partitions of a dataset ${\mathcal S}$ where
$\Psi = \{P^1,P^2,\ldots ,P^N\}$ is the set of the $N$ partitions.
Then we define $P^i = \{\mC^i_1, \mC^i_2,\ldots,\mC^i_{K_i}\}$ as a
candidate partition where $\mC^i_j$ is cluster $j$ of partition $P^i$,
$|\mC^i_j|$ is the number of observations in $\mC^i_j$, $K_i$ is the
number of clusters in partition $P^i$ and $\sum_{j=1}^{K_i} |\mC^i_j|
= n $ for all $i$.  Then the goal is to find, among the $N$ partitions
in $\Psi$, the optimal partition $P_{*}$ that ideally provides a close
match to the true partition. Our objective in this paper is to provide
methodology to identify the partitions $P_i$ and the optimal
$P_*$. 
\begin{comment}

In Section~\ref{kmeans} we present an algorithm to find a candidate partition of a dataset $\mathcal{S}$.  In Sections \ref{diffg}, \ref{init} and  \ref{chooseK} we describe some motivation for initializing and terminating the algorithm.  In Section \ref{dist2} we present a distance measure used in the algorithm.  In Section \ref{P*} we describe how to use multiple runs of the algorithm to produce different candidate partitions in $\Psi$ and then we propose a method to choose $P_*$ among the partitions in $\Psi$.   In Section~\ref{simmatrix} we describe a method to display similarities between different partitions.  
\end{comment}
\subsection{Background and Preliminaries} The development of our algorithm borrows ideas from
$K$-means and hierarchical clustering, so we revisit them briefly. 
\subsubsection{$K$-means:} \label{kmeans1}
The $K$-means algorithm starts with $K_0$ $p$-dimensional seeds
$\{\bmu^{(0)}_k; 1\leq k \leq K_0\}$ and then iterates between cluster
assignments and mean updates till convergence. Therefore, at the
$i$th step, we update our partitions to be
 $\mC_k^{(i)} = \{\bX_j \text{ : }
   \|\bX_j - \bmu^{(i)}_k \|\ = \min\limits_{1\leq l \leq
      K}\|\bX_j - \bmu^{(l)}_k \| j = 1,\ldots,n\}$,
  for $k=1,2,\ldots,K_0$, with $\|x \| = \sqrt{x'x}$. These
  updates are followed by recalculated cluster means, with
$\bmu^{(i+1)}_k  = \sum_{j\in \mC_k^{(i)}}\bX_{j}/{|\mC_k^{(i)}|}$. 
%\item Set $i$ = $i+ 1$. 
The algorithm continues until there are no further changes in  $\{\mC_k^{(i)}:
  k=1, \ldots, K_0\}$ (or, equivalently, in the $\bmu^{(i)}_k$s).
\paragraph{Initialization:}
Initialization can greatly impact performance of
$K$-means~\citep{maitra07} so we adopt \citet{mac67}'s suggestion
that samples $K$ distinct 
observations from the dataset as initial seeds and runs the algorithm
to convergence. We run this procedure $I$ times, with the
converged solution having the smallest within-group sum-of-squares
chosen as our $K$-groups partition. This approach is the default
setting of the {\tt kmeans()} function in R~\citep{R}, with the number of
initializations set by the {\tt nstart} argument. 
\paragraph{Choosing $K_0$:}
\label{diffg} 
Many methods~\citep[for
example,][]{marriott71,tibshiranietal01,mclachlan87,
sugarandjames03,maitraetal12} exist for choosing
$K_0$. Here we discuss the 
\citet{Krzanowski88} criterion which uses the trace of the 
pooled within-group variance-covariance matrix, which we denote as
$W_g$ for a $K$-groups partition. Following \citet{Krzanowski88},
$trace(W_K)$ should decrease dramatically as $K$ increases provided
that $K <\tilde{K}$, where $\tilde K$ is the true number of spherical
groups, but that this decrease should slow down once 
$K\geq\tilde{K}$.  Based on this rationale, and defining 
%\begin{equation*}
$Diff(K) = (K-1)^{2/p}trace(W_{K-1}) - K^{2/p}trace(W_{K})$,
%\end{equation*}  
the number of homogeneous spherically-dispersed groups $K_0$ can be
obtained as follows: Let $C_K = |Diff(K)/Diff(K+1)|$ and $K_1 , K_2
,\ldots, K_l$ be such that $C_{K_1}\geq C_{K_2}\geq,\ldots,\geq
C_{K_l}$.  Then choose $K_0 = K_1$.

\subsubsection{Agglomerative Hierarchical clustering:} Here, we
successively merge current groups assuming a distance
$d(A,B)$ between any two groups $A$ and $B$ and a 
mechanism (or {\em linkage}) to recalculate the distances when groups are
merged. Examples of linkages are {\em single} where $d(A,B)
= \min\{\|x-y \|: x\in A, y \in 
B\}$ or {\em average} with $d(A,B) = \sum_{x\in
  A}\sum_{y \in B}\|x-y \|/(|A||B|)$. The algorithm initially places
every observation in its own group, that is, by setting 
$\tilde{\mC_j}^{(0)} = \bX_j$ for all $j = 1,2,\ldots,n$.  Then, we
successively merge clusters at each stage, so that at the $i$th
stage, we have $n-i$ clusters, with $(n-i-2)$ many of those groups
unchanged from the previous stage. That is, we have $\tilde{\mC}_j^{(i)} \equiv
\tilde{\mC}_j^{(i-1)}$ for all  $j \in \{1,\ldots,n-i\}\setminus
(k.l)$ where $ k, l$ are such that $k<l$  and
$d(\tilde{\mC}_{k}^{i-1},\tilde{\mC}_{l}^{i-1}) = 
\min_{1\leq m<q\leq  n-i+1}{d(\tilde{\mC_{m}}^{(i-1)},\tilde{\mC}_{q}^{(i-1)})}$.  Set
$\tilde{\mC}_k^{(i)} = \tilde{\mC}_k^{(i-1)} \cup
\tilde{\mC}_l^{(i-1)} $ and if $l<n-i+1$ then $\tilde{\mC}_l^{(i)} =
\tilde{\mC}_{n-i+1}^{(i-1)}$.  Set $i$ = $i+1$.  
The merging continues until the entire hierarchy has been built, or a
hierarchy with a pre-specified number of groups $K_\bullet$ have been obtained. 

\subsection{The $K$-means hierarchical ($K$-mH) cluster merging algorithm} \label{kmeans}
\begin{comment}
Both the $K$-means algorithm and hierarchical clustering approach have
their individual advantages and disadvantages.  The $K$-means
algorithm is known to be fast converging however it lacks the ability
to find arbitrary shapes.  The hierarchical clustering algorithm with
single linkage has more flexibility in finding general shaped
clusters, however has a computational curse (long convergence time)
with regard to an increasing number of observations.  In addition,
both methods are highly influenced by outliers.  In this paper we
create partitions by combining ideas from the fast converging
$K$-means algorithm with the more flexible (in terms of cluster
shapes) hierarchical clustering algorithm.  Let ${\mC_1} , \ldots ,
{\mC_{K_0}}$ be $K_0$ disjoint initial starting clusters or $K_0$
entities.  In what follows we define clusters as a union or collection of
entities. Suppose $A$ and $B$ are collections of entities.  For the purpose
of merging collections of entities $A$ and $B$ in a hierarchical step we
define the distance $D$.  For single linkage, define $D(A,B) = \min_{1
  \leq l,m\leq K_0}\{d(\mC_{l},\mC_{m}): \mC_{l} \in A , \mC_{m}\in
B\}$. For average linkage, define $D(A,B) = \frac{1}{|A|^*|B|^*}
\sum_{\mC_{l}\in A}\sum_{\mC_{m} \in B} d(\mC_{l},\mC_{m})$, where
$|A|^*$ is the number of entities in $A$ and $d(\mC_{l},\mC_{m})$ is a
distance between entities $\mC_{l}$ and $\mC_{m}$ as defined in Section
\ref{dist2}.  Our combined approach is described below.   

\paragraph{$K$-means hierarchical ($K$-mH) algorithm:}
\end{comment}
Our proposed algorithm removes scatter and then creates multiple
partitions, each formed by combining $K$-means and hierarchical
clustering. The algorithm has the following steps.
\begin{enumerate}
\item {\em Removing scatter from the dataset:} The algorithm first
  removes scatter from the dataset from consideration. 
\item {\em Finding a partition:} \label{kmHsteps} Our algorithm has
  two phases. The 
  first focuses on finding a (potentially) large number ($K_0$) of
  homogeneous spherical groups while the next merges these groups
  according to some criterion. We call these phases the $K$-means and
  hierarchical phases. The exact details of these phases are as follows:

\begin{enumerate}
  \item {\em The $K$-means phase:} For a given $K_0$ and
    initialization,  the $K$-means phase uses its namesake algorithm
    with multiple ($m$) initializations to identify $K_0$ homogeneous
    spherically-distributed groups. \begin{comment}

We propose to use
    $K$-means algorithm for $K=2,\ldots,K_\bullet$  and then use some criterion to decide on 
    $K_0$, that is, the optimal number of homogeneous spherical
    groups. Performance of $K$-means can be very dependent on
    initialization and scatter, so we account for these aspects in our
    clustering strategy. (We note that this is potentially a two-stage
    phase, as will be expanded upon in
    Section~\ref{scatter}. \end{comment} 
  This  phase yields $K_0$ groups $\{\mC_1,\mC_2,\ldots,\mC_{K_0}\}$ with means 
    $\bmu_1,\bmu_2,\ldots,\bmu_{K_0}$. 
\begin{comment}
\begin{enumerate}
\item \emph{Setup:} Initialize the algorithm with $K_0$
  $p$-dimensional vectors, $\bmu^{(0)}_k$ for $1\leq k \leq K_0$
  denoted as seeds. We discuss the initialization mechanism a bit
  later.  If applicable, that is, if known {\em a priori} 
  or through the application, define the desired number of final
  clusters $K_*$.  Set $i$ = 0 and $K$ = $K_0$.  (Typically, $K_0>>K_*$.
\item \label{kmp} \emph{$K$-means step:} Run the $K$-means algorithm using the
  initial seeds of Step 1 and till convergence to get $K_0$ clusters
  $\{\mC_1,\mC_2,\ldots,\mC_{K_0}\}$. \end{comment} 
Each obtained cluster $\mC_k$ is
  now considered to be one entity. Therefore, we now have $K_0$ entities labeled as
  ${\mC_1} , \mC_2, \ldots , {\mC_{K_0}}$ for consideration. 
\item \label{hcp} \emph{Hierarchical phase:} For given $K_*$ and distance
  $d(\cdot,\cdot)$, we successively merge the $K$-means groups as follows:
\begin{enumerate}
\item  Set $i^* = 1$ and $d^*_1 = 1$.  Define $\tilde{\mC_j}^{(1)} = \mC_j$ for all $j$.  
\item For $j \in 1... (K_0-i^*)$ $\tilde{\mC}_j^{(i^*+1)} = \tilde{\mC}_j^{(i^*)}$.  Find $ k, l$ such that $k<l$  and $d(\tilde{\mC}_{k}^{i^*},\tilde{\mC}_{l}^{i^*}) = \min_{1\leq m<q\leq (K_0-i^*+1)}{d(\tilde{\mC}_{m}^{i^*},\tilde{\mC}_{q}^{i^*})}$.  Set $\tilde{\mC}_k^{(i^*+1)} = \tilde{\mC}_k^{(i^*)} \cup \tilde{\mC}_l^{(i^*)} $ and if $l<K_0-i^*+1$ then $\tilde{\mC}_l^{(i^*+1)} = \tilde{\mC}_{K_0-i^*+1}^{(i^*)}$, define $d^*_{i^*} = d(\tilde{\mC}_{k}^{i^*},\tilde{\mC}_{l}^{i^*})$.  Set $i^*$ = $i^*+1$.  
\item If $i^* = K_0$ or $i^* = K_0-K_*+1$ terminate, else return to
  Step 2(b).  
\end{enumerate}
\end{enumerate}
\item {\em Forming multiple partitions and choosing the optimal
    $P_*$:}
\label{step3} Repeat Step~\ref{kmHsteps} $N=ML$ times with $M$ different
$K_0$s and $L$ different $K_*$s to form multiple partitions. Determine
the optimal hierarchical partition $P_*$.
%\item {\em Visualization and choice of partition:} \label{step4} Use heatmaps to   visualize this partition. 
\end{enumerate}
Our outlined algorithm   has several aspects that need
clarification. We do this next.
%We extend this so that we do not only focus on the value $g$ which maximizes $C_g$, we consider the top $M$ values for $g$ that maximize $C_g$.  This allows us to use the $K$-means algorithm initialized at different number of starting groups $K_0$ so that the final solution is not as dependent on the number of starting clusters.  
\subsubsection{Scatter Removal:}\label{scatter}  
Outliers or scatter can greatly influence clustering
performance~\citep{maitraandramler09}. Although many
methods~\citep{byersandraftery98,tsengandwong05,maitraandramler09}
 exist, we adopt the following straightforward approach to eliminating 
scatter. We use $K$-means with the largest of our candidate group
sizes ($G$) and multiple initializations ($K\sqrt{np}$) to obtain a $G$-means
partition. Observations in any of the $G$ groups that have less than
0.1\% of the size of the dataset are labeled as scatter and eliminated
from further consideration. This leaves us with $n^*$ observations
$\bX_1,\bX_2,\ldots,\bX_{n^*}$ (say) which we proceed with clustering
using $K-mH$.  
\subsubsection{Distance  between entities:}\label{dist2}
For the hierarchical phase of Step~\ref{kmHsteps}, we calculate the
distance  between two clusters obtained from the $K$-means step by
assuming (non-homogeneous) 
spherically-dispersed Gaussian-distributed groups in the
dataset. Specifically, we let    
$\bX_{1},\bX_{2}, \ldots, \bX_{n^*}$ be   independent $p$-variate observations
with $\bX_i\sim N_p(\bmu_{\zeta_i},\sigma^2_{\zeta_i}\bI) $, where $\zeta_i
\in \{1, 2, \ldots, K\}$ for $i=1, 2, \ldots, n^*$. Here we assume 
that $\bmu_k$'s are all distinct and that $n_k$ is the number of
observations in cluster $k$.  Then the density for the $\bX_i$'s is
given by  
%\begin{equation}
$f(\bX) =  \sum_{k=1}^K{I(\bX \in
  {\mC}_k)\phi(\bX;\bmu_k,\sigma^2_k\text{I})},
\label{fx}
%\end{equation}
$
where $\mathcal C_k$ is a cluster indexed by the $N_p(\bmu_k,
\sigma^2_k\bI)$ density and $I(\bX \in {\mC}_k)$ is an indicator
function specifying whether observation $\bX$ belongs to the $kth$ group 
%with mean $\bmu_k$, covariance matrix $\bSigma_k$ and
 having a $p$-dimensional multivariate
normal density $\phi(\bX;\bmu_k,\sigma^2_k\bI)\propto
\sigma_k^{-p}\exp\left[-\frac{1}{2\sigma_k^{2}}(\bX-\bmu_k)'
(\bX-\bmu_k)\right]$, $k =1, \ldots, K$. 
Define the distance measure
\begin{equation}
\label{dist}
\mathcal{D}_{k} (\bX_{i}) = \frac{(\bX_{i}-\bmu_k)'(\bX_{i}-\bmu_k)}{\sigma_k^{2}}
\end{equation}
and the variable

\begin{equation}\label{ylj}
Y^{j,l}(\bX) \; {=} \; \mathcal{D}_{j}({\bX})-\mathcal{D}_l({\bX}), \;
\textup{ where } {\bX} \in {\mC}_l,
\end{equation}
and $Y^{l,j}(\bX)$ similarly.  Using the spherically-dispersed
Gaussian models formulated above,  $Y^{j,l}(\bX)$ is a  random variable 
which  represents the difference in squared distances  of $\bX \in \mC_l$
to the center of ${\mC}_j$ and to the center of
${\mC}_l$. Then
$p^j_l = \Pr[Y^{j,l}(\bX)<0]$ is the probability that an observation
from $\mC_l$ is classified
into ${\mC}_j$ and is calculated as follows.
\begin{theorem}
  \label{theorem1}
Let $\bX \sim N_p(\bmu_l, \bSigma_l)$, with $\bSigma_l$ a
positive-definite matrix. Further, let $Y^{j,l}(\bX) = 
\mD_j(\bX) - \mD_l(\bX)$, where $\mD_k(\bX) = (\bX -
\bmu_k)'\bSigma_k^{-1}(\bX - \bmu_k)$ for $k\in\{j,l\}$. 
Let $\lambda_1, \lambda_2, \ldots,\lambda_p$ be the eigenvalues of
$\bSigma_{j|l} \equiv 
\bSigma_l^{\frac{1}{2}}\bSigma_j^{-1}\bSigma_l^{\frac{1}{2}}$ with
corresponding eigenvectors $\bgamma_1, \bgamma_2, 
...\bgamma_p$. Then $ Y^{j,l}(\bX)$ is distributed as 
%
%use the Mahalanobis distance.  Further, let ${\mC}_j$ and
%${\mC}_l$ index two p-variate Gaussian clusters.  Define
%$\bSigma_{j|l}\equiv
%\bSigma_l^{\frac{1}{2}}\bSigma_j^{-1}\bSigma_l^{\frac{1}{2}}$, with
%spectral decomposition given by $\bSigma_{j|l} =
%\bgamma_{j|l}\Lambda_{j|l}\bgamma'_{j|l}$.  $\Lambda_{j|l}$ is a
%diagonal matrix containing the eigenvalues $\lambda_1, \lambda_2,
%...\lambda_p$ of $\bSigma_{j|l}$, and $\bgamma_{j|l}$ is the matrix of
%eigenvectors $\bgamma_1, \bgamma_2, ...\bgamma_p$ of $\bSigma_{j|l}$.
%Then
%\begin{eqnarray} \label{eq6}
 %Y^{j,l}(\bX) &\stackrel{d}{=}& 
%\vskip -0.7cm 
  $\sum^p_{i=1}I(\lambda_i\neq 1)\left[(\lambda_i-1)U_i -
\lambda_i\delta_i^2/(\lambda_i-1)\right] 
%\nonumber\\{}&& \hskip 1cm 
+\sum^p_{i=1} I(\lambda_i= 1)\delta_i(2Z_i+\delta_i), $ % \vskip -0.5cm  %&&
%\end{eqnarray}
where $U_i's$ are independent non-central $\chi^2$ random
variables with one degree of freedom and non-centrality parameter
$\lambda_i^2\delta_i^2/(\lambda_i-1)^2$ with $\delta_i =
\bgamma_i'\bSigma_l^{-\frac{1}{2}}(\bmu_l-\bmu_j)$ for $i \in
\{1,2,...,p\}\cap\{i:\lambda_l \neq 1 \}$, independent of $Z_i$'s,
which are independent standard normal random variables, for $i \in
\{1,2,...,p\}\cap\{i:\lambda_i = 1 \}.$\\
\end{theorem}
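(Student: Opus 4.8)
The plan is to reduce $Y^{j,l}(\bX)$ to a weighted sum of independent squared normals by a whitening transformation followed by the spectral decomposition of $\bSigma_{j|l}$. First I would standardize by setting $\bxi = \bSigma_l^{-1/2}(\bX - \bmu_l)$ with $\bSigma_l^{1/2}$ the symmetric square root, so that $\bxi \sim N_p(\mathbf{0},\bI)$ and $\mD_l(\bX) = \bxi'\bxi$. Writing $\bX - \bmu_j = \bSigma_l^{1/2}\bxi + (\bmu_l - \bmu_j)$ and substituting into $\mD_j(\bX) = (\bX - \bmu_j)'\bSigma_j^{-1}(\bX - \bmu_j)$ expands $\mD_j$ into a quadratic form $\bxi'\bSigma_{j|l}\bxi$ plus a linear term in $\bxi$ and a constant, with $\bSigma_{j|l} = \bSigma_l^{1/2}\bSigma_j^{-1}\bSigma_l^{1/2}$ emerging as the matrix of the quadratic part.

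Next I would diagonalize. Since $\bSigma_{j|l}$ is symmetric positive-definite, the spectral theorem gives $\bSigma_{j|l} = \bG\bL\bG'$ with $\bG = [\bgamma_1,\ldots,\bgamma_p]$ orthogonal and $\bL = \mathrm{diag}(\lambda_1,\ldots,\lambda_p)$. The rotation $\mathbf{W} = \bG'\bxi$ preserves the distribution, so $W_1,\ldots,W_p$ are independent standard normals; then $\mD_l(\bX) = \sum_i W_i^2$ and the quadratic part of $\mD_j$ becomes $\sum_i \lambda_i W_i^2$. The decisive computation is the linear and constant terms: from $\bSigma_{j|l}\bgamma_i = \lambda_i\bgamma_i$ one gets the eigen-relation $\bSigma_j^{-1}\bSigma_l^{1/2}\bgamma_i = \lambda_i\bSigma_l^{-1/2}\bgamma_i$, which collapses the cross-term coefficient to $\lambda_i\delta_i$ with $\delta_i = \bgamma_i'\bSigma_l^{-1/2}(\bmu_l - \bmu_j)$, and collapses the constant $(\bmu_l - \bmu_j)'\bSigma_j^{-1}(\bmu_l - \bmu_j)$ to $\sum_i \lambda_i\delta_i^2$. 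These assemble into the clean identity $\mD_j(\bX) = \sum_i \lambda_i(W_i + \delta_i)^2$, so that $Y^{j,l}(\bX) = \sum_i \bigl[\lambda_i(W_i + \delta_i)^2 - W_i^2\bigr]$.

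The final step treats each summand separately. For indices with $\lambda_i \neq 1$, completing the square in $W_i$ rewrites the term as $(\lambda_i - 1)\bigl(W_i + \tfrac{\lambda_i\delta_i}{\lambda_i-1}\bigr)^2 - \tfrac{\lambda_i\delta_i^2}{\lambda_i-1}$; since $W_i + \tfrac{\lambda_i\delta_i}{\lambda_i-1} \sim N(\tfrac{\lambda_i\delta_i}{\lambda_i-1},1)$, its square $U_i$ is non-central $\chi^2_1$ with the stated non-centrality $\lambda_i^2\delta_i^2/(\lambda_i-1)^2$. For indices with $\lambda_i = 1$, the term simplifies directly to $\delta_i(2W_i + \delta_i)$ with $Z_i := W_i$. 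Summing over the two index sets reproduces the claimed representation, and mutual independence of the $U_i$ and $Z_i$ follows because each depends on a distinct coordinate $W_i$ of the independent vector $\mathbf{W}$.

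The main obstacle is the bookkeeping in the second paragraph: verifying that the linear and constant pieces, which superficially involve $\bSigma_j^{-1}$ and $\bSigma_l^{1/2}$ in awkward combinations, both reduce cleanly in the $\bgamma_i$ basis. Everything hinges on the single eigenvector identity $\bSigma_j^{-1}\bSigma_l^{1/2}\bgamma_i = \lambda_i\bSigma_l^{-1/2}\bgamma_i$; once that is secured, the remaining manipulations are routine completions of squares. I would finally note that although the theorem is stated for general $\bSigma_l,\bSigma_j$, the application in (\ref{dist}) takes $\bSigma_k = \sigma_k^2\bI$, in which case $\bSigma_{j|l} = (\sigma_l^2/\sigma_j^2)\bI$ and every $\lambda_i$ equals the single ratio $\sigma_l^2/\sigma_j^2$.
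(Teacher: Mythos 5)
Your proposal is correct and follows essentially the same route as the paper's proof: whitening via $\bxi = \bSigma_l^{-1/2}(\bX-\bmu_l)$, spectral decomposition of $\bSigma_{j|l}$, rotation to $\mathbf{W}=\bG'\bxi$, and per-coordinate completion of the square, with your eigen-relation $\bSigma_j^{-1}\bSigma_l^{1/2}\bgamma_i = \lambda_i\bSigma_l^{-1/2}\bgamma_i$ being exactly the identity the paper uses in matrix form to collapse the linear and constant terms. The only cosmetic differences are that you expand $\mD_j$ and $\mD_l$ separately (yielding the tidy intermediate identity $\mD_j(\bX)=\sum_i\lambda_i(W_i+\delta_i)^2$) and treat $\lambda_i\neq 1$ as a single case, whereas the paper combines the quadratic forms first and splits $\lambda_i>1$ and $\lambda_i<1$ before arriving at the same representation.
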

\begin{proof}
Let $\bxi\sim N_p(0,{\bI})$. Since $\bX \stackrel{d}{=}
\bSigma_l^{\frac{1}{2}}\bxi +\bmu_l$, %, using (\ref{simp}), 
we have
\begin{eqnarray} \label{eq13}
Y^{j,l}(\bX)&{=}&
{\bX }'(\bSigma_j^{-1}-\bSigma_l^{-1}){\bX } +
2{\bX}'(\bSigma_l^{-1}\bmu_l-\bSigma_j^{-1}\bmu_j)                                                                  +
\bmu_j'\bSigma_j^{-1}\bmu_j-\bmu_l'\bSigma_l^{-1}\bmu_l\nonumber\\
&\stackrel{d}{=}& (\bSigma_l^{\frac{1}{2}}\bxi
+\bmu_l)'(\bSigma_j^{-1}-\bSigma_l^{-1})(\bSigma_l^{\frac{1}{2}}\bxi
+\bmu_l) + 2(\bSigma_l^{\frac{1}{2}}\bxi
+\bmu_l)'(\bSigma_l^{-1}\bmu_l-\bSigma_j^{-1}\bmu_j) +
\bmu_j'\bSigma_j^{-1}\bmu_j-\bmu_l'\bSigma_l^{-1}\bmu_l\nonumber\\{}&=&
\bxi'(\bSigma_{j|l}-{\bI})\bxi +
2\bxi'\bSigma_l^{\frac{1}{2}}\bSigma_j^{-1}(\bmu_l-\bmu_j) +
(\bmu_l-\bmu_j)'(\bSigma_j^{-1})(\bmu_l-\bmu_j)
\end{eqnarray}
where
$\bSigma_{j|l}=\bSigma_l^{\frac{1}{2}}\bSigma_j^{-1}\bSigma_l^{\frac{1}{2}}$.
Let the spectral decomposition of $\bSigma_{j|l}$ be given by
$\bSigma_{j|l} = \bG_{j|l}\bL_{j|l}\bG'_{j|l}$,
 where $\bL_{j|l}$ is a
diagonal matrix containing the eigenvalues $\lambda_1, \lambda_2,
\ldots \lambda_p$ of $\bSigma_{j|l}$, and $\bG_{j|l}$ is an
orthogonal matrix containing the eigenvectors $\bgamma_1, \bgamma_2,
\ldots,\bgamma_p$ of $\bSigma_{j|l}$. Since $\mathbf{Z} \equiv{\bG_{j|l}}'\bxi
\sim  N_p(0,{\bI}) $ as well, we get from (\ref{eq13}) that
\begin{eqnarray}
Y^{j,l}(\bX)&\stackrel{d}{=}&
\bxi'(\bG_{j|l}\bL_{j|l}\bG'_{j|l}-\bG_{j|l}\bG'_{j|l})\bxi 
 +2\bxi'(\bG_{j|l}\bL_{j|l}\bG'_{j|l}\bSigma^{-\frac{1}{2}}_1)(\bmu_l-\bmu_j)
                                                                           +(\bmu_l-\bmu_j)'(\bSigma^{-\frac{1}{2}}_1\bG_{j|l}\bL_{j|l}\bG'_{j|l}
\bSigma^{-\frac{1}{2}}_l)(\bmu_l-\bmu_j)\nonumber\\{}&=&
(\bG'_{j|l}\bxi)'(\bL_{j|l}-{\bI})(\bG'_{j|l}\bxi)+2(\bG'_{j|l}\bxi)'(\bL_{j|l}\bG_{j|l}\bSigma^{-\frac{1}{2}}_l)(\bmu_l-\bmu_j)+(\bmu_l-\bmu_j)'(\bSigma^{-\frac{1}{2}}_l\bG_{j|l}\bL_{j|l}
\bG_{j|l}'\bSigma^{-\frac{1}{2}}_l)(\bmu_l-\bmu_j)\nonumber\\&\stackrel{d}{=}&
\sum_{i=1}^p\left[ (\lambda_i-1){Z_i}^2+2\lambda_i\delta_i
 Z_i+\lambda_i\delta_i^2\right], \label{eq14.5}
\end{eqnarray}
where $\delta_i, i=1,2,\ldots, p$ are as in the statement of the
theorem. We can simplify (\ref{eq14.5}) further based on the values of
$\lambda_i$: If $\lambda_i>1$:
$(\lambda_i-1)Z_i^2 +2\lambda_i\delta_i
Z_i+\lambda_i\delta_i^2=(\sqrt{\lambda_i-1}
Z_i+\lambda_i\delta_i/\sqrt{\lambda_i-1})^2-\lambda_i\delta_i^2/
(\lambda_i-1)$, while %, which is distributed as a
                      %$(\lambda_i-1)\chi^2_{l,\lambda^2_i\delta^2_i/(\lambda_i-1)^2}$-random
                      %variable. If  
for $\lambda_i<1$: $(\lambda_i-1){Z_i}^2 
+2\lambda_i\delta_i Z_i+\lambda_i\delta_i^2=-(\sqrt{1-\lambda_i} 
Z_i-\lambda_i\delta_i/\sqrt{1-\lambda_i})^2-\lambda_i\delta_i^2/ 
(\lambda_i-1)$. In both cases, $(\lambda_i-1){Z_i}^2+2\lambda_i\delta_i
 Z_i+\lambda_i\delta_i^2$ is distributed as a 
$(\lambda_i-1)\chi^2_{l,\lambda^2_i\delta^2_i/(\lambda_i-1)^2}$-random
variable shifted by $-\lambda_i\delta_i^2/(\lambda_i-1)$. When
$\lambda_i = 1$, $(\lambda_i-1)  
Z_i^2 +2\lambda_i\delta_i Z_i+\lambda_i\delta_i^2 =
2\delta_iZ_i+\delta_i^2.$ The theorem follows from some further minor rearrangement of terms.
\end{proof}
\begin{corollary}
\label{cor2}
Let $\bX\sim N_p(\bmu_l,\sigma_l^2\bI)$. Define $\mD_k(\bX)$ 
 as in \eqref{dist}. If $\sigma_l=\sigma_j$, we have 
$Y^{j,l}(\bX) \sim N(\norm{\bmu_j-\bmu_l}^2/\sigma_l^2,4\norm{\bmu_j-\bmu_l}^2/\sigma_l^2)$,
otherwise $Y^{j,l}(\bX) \sim (\sigma_l^2/\sigma_j^2 -1)
\chi^2_{p;\norm{\bmu_l-\bmu_j}^2/{(\sigma_j^2-\sigma_l^2)}^2 } - \norm{\bmu_l-\bmu_j}^2/(\sigma_l^2-\sigma_j^2)$.
\end{corollary}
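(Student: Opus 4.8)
The plan is to recognize Corollary~\ref{cor2} as the isotropic specialization of Theorem~\ref{theorem1} and to read off both cases directly from the two branches of its conclusion. First I would substitute $\bSigma_l = \sigma_l^2\bI$ and $\bSigma_j = \sigma_j^2\bI$ into $\bSigma_{j|l} = \bSigma_l^{1/2}\bSigma_j^{-1}\bSigma_l^{1/2}$, which collapses to $(\sigma_l^2/\sigma_j^2)\bI$. Hence every eigenvalue equals the single value $\lambda = \sigma_l^2/\sigma_j^2$, and any orthonormal basis (say the standard one) serves as a set of eigenvectors. Since $\bSigma_l^{-1/2} = \sigma_l^{-1}\bI$, the quantities $\delta_i = \bgamma_i'\bSigma_l^{-1/2}(\bmu_l - \bmu_j)$ become scaled coordinates of $\bmu_l - \bmu_j$, so that $\sum_{i=1}^p \delta_i^2 = \norm{\bmu_l - \bmu_j}^2/\sigma_l^2$. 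This single identity funnels all of the $\bmu$-dependence into the final expressions, and the fact that all $\lambda_i$ coincide is what makes the scale mixtures in Theorem~\ref{theorem1} collapse to a single, recognizable law.

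For the equal-variance case $\sigma_l = \sigma_j$ I would use $\lambda = 1$, so only the second ($\lambda_i = 1$) sum in Theorem~\ref{theorem1} survives, giving $Y^{j,l}(\bX) \stackrel{d}{=} \sum_{i=1}^p \delta_i(2Z_i + \delta_i)$. This is a fixed linear combination of independent standard normals shifted by a constant, hence itself normal; collecting its mean $\sum_i \delta_i^2$ and variance $4\sum_i \delta_i^2$ and inserting $\sum_i \delta_i^2 = \norm{\bmu_j - \bmu_l}^2/\sigma_l^2$ yields the stated $N(\norm{\bmu_j-\bmu_l}^2/\sigma_l^2,\,4\norm{\bmu_j-\bmu_l}^2/\sigma_l^2)$.

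For the unequal-variance case $\sigma_l \neq \sigma_j$ I would use the first sum, in which every $\lambda_i = \lambda \neq 1$. Factoring out the common $(\lambda - 1)$ leaves $(\lambda - 1)\sum_{i=1}^p U_i - \frac{\lambda}{\lambda - 1}\sum_{i=1}^p \delta_i^2$, where the $U_i$ are independent non-central $\chi^2_1$ variables. The key reduction is that independent non-central chi-squared variables add: their degrees of freedom sum to $p$ and their non-centralities sum, so $\sum_i U_i \sim \chi^2_{p;\nu}$ with $\nu = \frac{\lambda^2}{(\lambda-1)^2}\sum_i \delta_i^2$. It then remains to rewrite the three constants in $\sigma_l,\sigma_j$: the leading factor is $\lambda - 1 = \sigma_l^2/\sigma_j^2 - 1$, and the shift simplifies via $\frac{\lambda}{\lambda-1}\sum_i\delta_i^2 = \norm{\bmu_l-\bmu_j}^2/(\sigma_l^2 - \sigma_j^2)$, reproducing the stated shift exactly.

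The delicate step, and the place I expect to be the main obstacle, is assembling the non-centrality constant $\nu$: substituting $\lambda = \sigma_l^2/\sigma_j^2$ together with $\sum_i\delta_i^2 = \norm{\bmu_l-\bmu_j}^2/\sigma_l^2$ gives $\nu = \frac{\lambda^2}{(\lambda-1)^2}\cdot\frac{\norm{\bmu_l-\bmu_j}^2}{\sigma_l^2}$, and this must be simplified using $(\lambda-1)^2\sigma_j^4 = (\sigma_l^2-\sigma_j^2)^2$, where it is easy to drop a factor of $\sigma_l^2$. Because of this, I would cross-check the constants by deriving both cases directly, bypassing the degenerate-eigenvector accounting: writing $\bX = \bmu_l + \sigma_l\bxi$ with $\bxi \sim N_p(0,\bI)$ gives $Y^{j,l}(\bX) = (\sigma_l^2/\sigma_j^2 - 1)\norm{\bxi}^2 + (2\sigma_l/\sigma_j^2)\bxi'(\bmu_l - \bmu_j) + \norm{\bmu_l-\bmu_j}^2/\sigma_j^2$, and completing the square in $\bxi$ reproduces the shifted, scaled non-central $\chi^2_p$ in the unequal case (with $\nu = \sigma_l^2\norm{\bmu_l-\bmu_j}^2/(\sigma_l^2-\sigma_j^2)^2$) and the normal law in the equal case. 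This direct route is the cleanest way to pin down the non-centrality parameter and to confirm the leading factor and shift.
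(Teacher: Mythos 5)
Your proposal takes the same route as the paper's own proof: specialize Theorem~\ref{theorem1} to $\bSigma_k=\sigma_k^2\bI$, observe that all eigenvalues collapse to $\lambda=\sigma_l^2/\sigma_j^2$ with unit eigenvectors and $\sum_{i=1}^p\delta_i^2=\norm{\bmu_l-\bmu_j}^2/\sigma_l^2$, and invoke additivity of independent non-central $\chi^2_1$ variables (the equal-variance case being the $\lambda_i=1$ branch, which sums to a normal). So methodologically you are aligned with the paper. Your extra direct derivation by completing the square in $\bxi$, however, pays off in a way worth recording: the non-centrality parameter you obtain, $\sigma_l^2\norm{\bmu_l-\bmu_j}^2/(\sigma_l^2-\sigma_j^2)^2$, is the correct one. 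It agrees with what Theorem~\ref{theorem1} gives, namely $\frac{\lambda^2}{(\lambda-1)^2}\sum_i\delta_i^2=\frac{\sigma_l^4}{(\sigma_l^2-\sigma_j^2)^2}\cdot\frac{\norm{\bmu_l-\bmu_j}^2}{\sigma_l^2}$, and it is the only dimensionally consistent choice (a $\chi^2$ non-centrality must be a pure number, while $\norm{\bmu_l-\bmu_j}^2/(\sigma_j^2-\sigma_l^2)^2$ is not). The corollary as printed, $\chi^2_{p;\norm{\bmu_l-\bmu_j}^2/(\sigma_j^2-\sigma_l^2)^2}$, has dropped exactly the factor $\sigma_l^2$ you warned about; the paper's own proof is too terse to display this constant, which is presumably why the typo went unnoticed there. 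The leading factor $\sigma_l^2/\sigma_j^2-1$ and the shift $-\norm{\bmu_l-\bmu_j}^2/(\sigma_l^2-\sigma_j^2)$ in your derivation match the statement, so the discrepancy is confined to the non-centrality parameter, and your version should be taken as the corrected statement.
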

\begin{proof}
%  The proof follows from Theorem~\ref{theorem1} because
  Here,   $\lambda_i \equiv 
  \sigma_l^2/\sigma_j^2$, $\bgamma_i$ is the 
  $i$th unit vector, and 
  $\sum_{i=1}^p\delta_i^2=\norm{\mu_l-\mu_j}^2/\sigma_l^2$. Also,
  the sum of $p$ 
  independent $\chi^2_{1;\tau_i^2}$ random variables has the same
  distribution as a  $\chi^2_{p;\sum_{i=1}^p\tau_i^2}$ random
  variable.  The proof follows from Theorem~\ref{theorem1}.
\end{proof}
Corollary~\ref{cor2} provides an easy calculation for $p^j_l$ and
$p_j^l$. Note, however, that for large $\delta$ (and/or $p$) the
$\chi^2_{p;\delta}$ cumulative distribution function is not evaluated
accurately so we approximate this quantity by the corresponding
cumulative distribution function of the $N(p+\delta, 2(p+2\delta))$
random variable~\cite[for details, see][pages 22-24 and problem
1.8]{muirhead05}. The net result is that we have approximate but very
speedy and accurate calculations. 
This is important because our hierarchical phase uses  
the distance measure between groups $\mC_j$ and $\mC_l$ that we define
to be 
\begin{equation}\label{distmeasure}
d(\mC_j,\mC_l) = 1-(p^j_l+p^l_j)/2.
\end{equation}
We now adapt this distance measure to the initial and iterative parts
of the hierarchical phase. At the beginning of the hierarchical phase
(equivalently, the conclusion of the $K$-means phase), we have  $K_0$
entities with labels 
$\mC_{1},\ldots, \mC_{K_0}$.  For $1 \leq k\leq K_0$, we already have
the $\hat\bmu_k$s while the covariance matrix 
($\hat\sigma^2_k\bI$) is estimated by setting $\sigma_k^2$ as the trace of
the variance-covariance matrix of $\mC_{k}$ scaled by $p$. For
subsequent stages, \eqref{distmeasure} is 
updated by replacing the distance between an entity (say, $\mC_l$) and
a merged entity (say, $\mC_j \cup \mC_k$) as $d(\mC_l, \mC_j\cup\mC_k)
= \min \{d(\mC_l, \mC_j), d(\mC_l,\mC_k)\}$. A convenient aspect of this
strategy is that off-the-shelf hierarchical clustering software (for
example, the {\tt hclust} function in R) with single linkage can be
used to implement the hierarchical phases of our $K$-mH algorithm.

\subsubsection{Forming $N$ partitions and choosing $P_*$:}\label{P*}
Step~\ref{kmHsteps} of the $K-mH$ algorithm produces one
partition starting with $K_0$ entities ending with $K_*$
clusters. Step~\ref{step3} runs Step~\ref{kmHsteps}  $N=ML$ times, where
$M$ is the number of $K_0$s and $L$ is the number of $K_*$s used. We
discuss choosing $K_0$ and $K_*$ next.
 \paragraph{Choosing candidate $K_0$:}\label{chooseK0}
Our proposal for $K_0$ involves chooses a range of values
$\{k_1,k_2,\ldots,k_m\}$, $m\geq M$ for which we calculate $C_{k_1},
C_{k_2},\ldots,C_{k_m}$ using \citet{Krzanowski88}'s suggestions of
Section \ref{diffg}.  We sort these values to get $C_{g_1}\geq
C_{g_2}\geq,\ldots,\geq C_{g_m}$, where the set $\{g_1, g_2 ,\ldots,
g_m\} = \{k_1,k_2,\ldots,k_m\}$. However, instead of setting
$K_0\equiv g_1$ as 
recommended by \citet{Krzanowski88}, we propose running 
Step~\ref{kmHsteps} of our algorithm for each $K_0\equiv K_o^{(i)}$,
where $K_0^{(1)} = g_1, K_0^{(2)} = g_2,
\ldots , K_0^{(M)} = g_M$, that is, for the numbers of clusters
corresponding to the  $M$ highest $C_{g_j}$s. So we run the $K$-means
phase $M$ times with $K_0 = K_0^{(i)}$ for $i=1,2,\ldots,M$, with 
$K_0^{(1)} = g_1, K_0^{(2)} = g_2,  \ldots , K_0^{(M)} = g_M.$ For
each of these runs, we set $K_*\equiv K_*^{(i)}$ in the hierarchical
phase and in the manner described next.
\paragraph{Choosing candidate $K_*$:}\label{chooseK}
For each value of $K_0^{(i)}$, we use $K_*$ if the number of desired
general-shaped clusters is known and then we set 
$L=1$. When $K_*$ is unknown, we
obtain a range of $K_*$s by defining change-points ($CP$s) as 
$CP_{k} = d^*_{k+1} - d^*_{k}$ (for $k = 1,\ldots, K_0^{(i)}$)
where  $d^*_1 \leq d^*_2 \leq \ldots\leq
d^*_{K_0^{(i)}}$ are  calculated during Step~\ref{hcp} of the
algorithm.   We sort these $CP$-values to get $CP_{q_1}\geq
CP_{q_2}\geq,\ldots,\geq CP_{q_{K_0^{(i)}-1}}$, where the set $\{q_1, q_2 ,\ldots,
q_{K_0^{(i)}-1}\}$ is some appropriate permutation of the set 
$\{2,3,\ldots,K_o^{(i)}\}$. We consider the first $L$ of these
values. That is, we define $k_{i,1} = q_1, k_{i,2} =
q_2,\ldots, k_{i,L} = q_L$ as in Section \ref{chooseK} for when we
have $K_0 = K_0^{(i)}$.  Then for each $K_0^{(i)}$ we obtain  $L$
partitions using $K_* = k_{i,j}$ for $j \in \{1,2, \ldots, L\}.$ Thus,
we arrive at $N=ML$ partitions $\{P_1,P_2,\ldots,P_N\}$ for all
combinations of $K_0$ and $K_*$. 

\begin{comment}
\paragraph{Choosing $P_*$}\label{pstar}We define the $N\times N$ matrix $\mathcal{W}$ where
$\mathcal{W}_{i,j} = \mathcal{R}_{i,j}$, where $\mathcal{R}_{i,j}$ is
the value for the Adjusted Rand Index~\citep{hubertandarabie85}
between partitions $P_i$ and $P_j$.  Define the objective function: 
\begin{equation}\label{objP}
\bar{\mathcal{W}_i} = \sum_j{\mathcal{W}_{i,j}/N}. 
\end{equation}
Then, we choose $P_*$ to be the partition that best matches $\Psi$ in
the sense of maximizing the objective function.  Thus,
\begin{equation}\label{P*choose}
 P_*= \{P^i: \bar{\mathcal{W}_i} = \max\limits_{1<j<N}{\bar{\mathcal{W}_{j}}}\}.
\end{equation}

$P_*$ is a good choice for the final partition since it represents the
partition that is most similar to all the other candidate partitions.
\end{comment}

\subsubsection{Visualizing partitions and choosing optimal
  $K_*$:}\label{simmatrix}  %As described in Section~\ref{intro},  \citet{fred05} visualize simi larities between different partitions. 
We extend \citet{fred05}'s ideas to visualize the stability
and variability in our partitions. Consider the $n\times n$ similarity
matrix $\Psi$ with $(i,j)$th entry $\psi_{ij} = n_{i,j}/N$, where $n_{ij}$ is the
number of times 
that the $i$th and $j$th observations are in the same cluster across
the $N$ partitions obtained from Section~\ref{P*}. % A clustered
                                % heatmap of this      A heat 
                             % map is a false color image  with a
                             % dendrogram added to the left side and
                             % to the top.  Typically,      reordering
                             % of the rows and columns according to
                             % some set of values      (row or column
                             % means) within the restrictions imposed
                             % by the      dendrogram is carried out.
\begin{comment}
We use \citet{fred05}'s ideas to display the similarity matrix for
both large and small dataset.   
Before we graph the dissimilarity matrix, we order the observations in an informative way.  The ordering of the observations is chosen by randomly selecting $n_*$ rows of the dissimilarity matrix.  Then the columns and rows of the matrix are ordered based on the entries in the first randomly chosen row.  Ties are broken by the ordering in the second randomly chosen row.  This process is continued iteratively through the $n_*$ randomly chosen rows.  For the examples presented in this paper we find that $n_* = 15$ is sufficient to produce informative graphs.  This heatmap gives us an indication of the correct number of groups or how confident we are in the final solution.  If there is not a clear separation between the groups then this suggest that we cannot be as confident in the final clustering.  If, however, we do see a clear break between groups we can assume our clustering results to be more accurate.  
\end{comment}
We display $\Psi$ via a clustered heatmap. The heatmap provides
indication into both the structure and stability of the clustering. We
can use this heatmap to decide on $K_*$ by determining all partitions
which remain after thresholding below $\psi_{ij}=0.5$. We use two
alternative choices in forming these partitions. In the first
case, if the off-diagonal $\psi_{ij}$s are generally small or
uncertain ({\em i.e.} their mean is small or their coefficient of
variation is high), we use 
single-linkage otherwise we use complete linkage. As
 with \citet{fred05}, heatmaps create very large files for large $n$
 so we then use a 
random sample of the observations. We replicate 
this process $B$ times to assess the variability in $K_*$. 
\begin{comment}
will provide a way to compare the different partitions.  Furthermore,
the repetition of this process will allow us to determine if there is
consistency between the randomly chosen subsets. 
We illustrate the utility of these heatmaps for some examples in Section~\ref{examples}.
%Two example heatmaps are presented in Figures \ref{heat_banana}(a-b).  Here we see that there appears to be a clear case for three different groups represented by the three green blocks.  Note that the two different visualizations are based on 2 random samples with 500 of 3000 observations that were present in the original dataset.  
Finally, we note that the heatmap can sometimes provide overwhelming evidence for a
specific number of general-shaped clusters.  Then
$P_*$ is chosen as in \eqref{objP} and \eqref{P*choose}.   
\end{comment}
\subparagraph{Final partition:}
With $K_*$ known or determined through the methods
of Section~\ref{simmatrix}, we have $L =
1$ as per Section~\ref{P*}. Then, with the 
$N=M$ partitions, we pick the clustering that is most similar to the
other $N-1$ partitions. This is operationally implemented by defining
the  $N\times N$ matrix $\mathcal{W}$ where 
$\mathcal{W}_{i,j} = \mathcal{R}_{i,j}$, where $\mathcal{R}_{i,j}$ is
the value for the Adjusted Rand Index~\citep{hubertandarabie85}
between partitions $P_i$ and $P_j$.  Define the objective function: 
%\begin{equation}\label{objP}
$\bar{\mathcal{W}_i} = \sum_j{\mathcal{W}_{i,j}/N}. $
%\end{equation}
Then, we choose $P_*$ to be the partition that best matches $\Psi$ in
the sense of maximizing the objective function.  Thus,
$%\begin{equation}\label{P*choose}
 P_*= \{P^i: \bar{\mathcal{W}_i} = \max\limits_{1<j<N}{\bar{\mathcal{W}_{j}}}\}
$%\end{equation}
is our choice for the final clustering and represents the
partition that is most similar to all the other candidate partitions.

In this section, we have developed an algorithm that combines elements
of $K$-means and hierarchical clustering to identify general-shaped
clusters. All steps in our algorithm are easily implemented using
existing software libraries and functions in R~\citep{R} and other
programming languages. We next evaluate  performance of our algorithm
on several datasets.

\section{Performance Evaluations}\label{examples}
We now evaluate $K$-mH on 
simulated and real datasets to highlight the
strengths and weaknesses of our methodology. 
\begin{comment} For the general-shaped
datasets we use single linkage to merge clusters in the  hierarchical step of
$K$-mH.  For more spherical groups we use average linkage to merge
clusters in the  hierarchical step of  $K$-mH.  
\end{comment}
We compare $K$-mH to the EAC (FJ) of \citet{fred05} (FJ), cluster
merging (CM) of \citet{baudryetal08}, generalized single linkage with
nearest-neighbor density estimate (GSL-NN) \citep{nugent10},
DEMP~\citep{hennig10} and DEMP+~\citep{melnykov16}.  We used 
R \citep{R} for all methods except for CM which used 
Matlab code provided in the supplemental material of
\citet{baudryetal08}. For CM, we used the ``elbow rule'' on the plot
of entropy variation against $K$ to determine
$K$ \citep{baudryetal08} while for GSL-NN, we used the procedure
in Section 7 of \citet{nugent10}. For FJ, er used the method in
Section 3.3 of \cite{fred05}.  %As this procedure is subjective we
                                %take a conservative approach and give
                                %the best partition between several
                                %choices when the choice is
                                %unclear. 
%$K$-mH produces several candidate partitions before choosing the
%partition $P_*$ as outlined   in Section \ref{P*}.  In the following
%examples we use $M$ = 6 and $L$ = 3 for a total of 18 different
%candidate partitions.  We have tried other values for $M$ and $L$
%with similar success but use this combination in all examples to
%provide consistency.   
Our $K$-mH algorithm used $M=\mbox{min}\{10,\lfloor \sqrt{np}/10\rfloor\}$
(where $\lfloor x\rfloor$ is
the smallest integer less than or equal to $x$), $L=3$ (before
estimating $K_*$), 
$B=100$ and $G=\lfloor\sqrt{n}\rfloor$. 
In all cases, we used  
$\mathcal{R}$ \citep{hubertandarabie85} calculated 
between the true and estimated partitions to quanitify performance.

\subsection{Two-dimensional Examples}\label{2d}
We first illustrate and evaluate performance on many two-dimensional
examples found in the literature. %xamples were taken from other
                                %papers with a focus on cluster
                                %merging.  Other examples were
                                %generated to showcase the flexibility
                                %of the $K-$means hierarchical
                                %clustering algorithm.  

\subsubsection{Smaller-sized Datasets:} 
\begin{figure}[h!t]
\centering
\mbox{
\subfloat[Banana-clump Dataset]{
\includegraphics[angle=0,totalheight=2.25in]{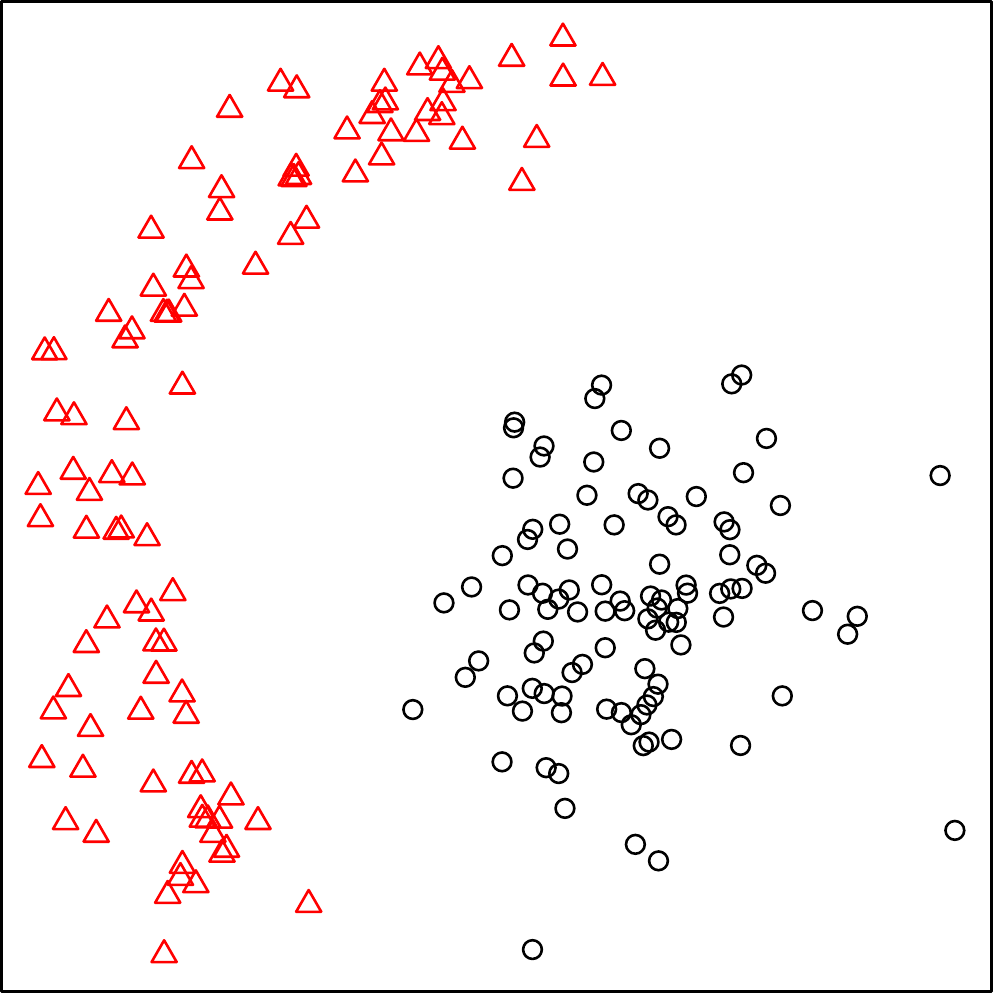}}
\hspace{0.1in}
\subfloat[Heatmap of Partitioning]{
\includegraphics[angle=0,totalheight=2.25in]{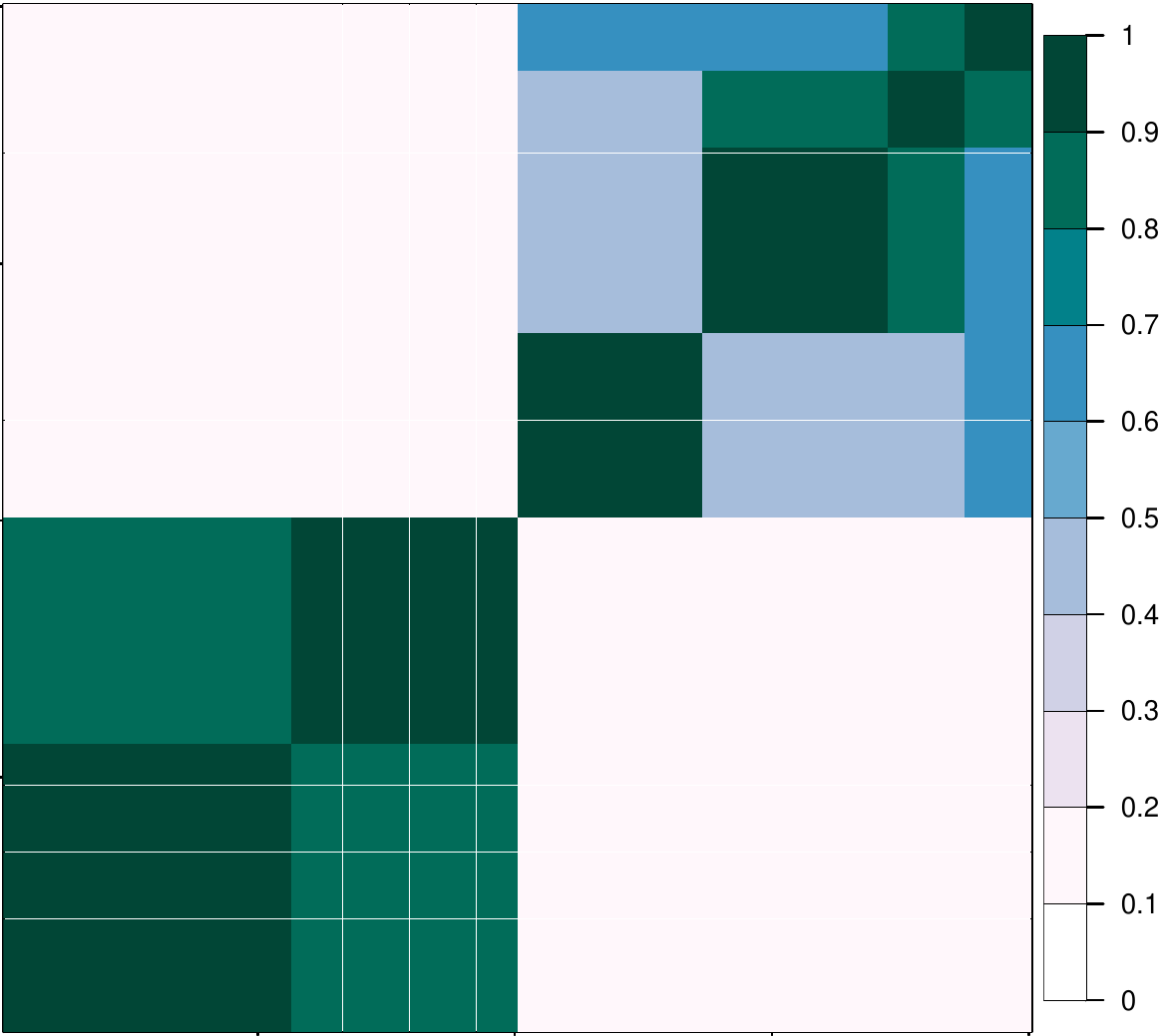}}
}
\caption{(a) $K$-mH partitioning of the Banana-clump dataset 
  and (b) heatmap illustrating clustering uncertainty and stability.}
\label{banclump}
\end{figure}
The Banana-clump dataset~(Figure \ref{banclump}a) of \citet{nugent10} has
200 observations. FJ, DEMP+, GSL-NN and $K$-mH all reproduce the
original partitioning but DEMP and the ``elbow'' approach of CM
suggest three groups with the banana essentially halved. Figure
\ref{banclump}b  
displays the heatmap obtained as part of $K$-mH. 
Two large clustered blocks are indicated with uncertainty over whether
the upper right block should be partitioned further. (It is this
partitioning that DEMP and CM go for.) Therefore, the heatmap displays
the uncertainty and structure in the partitioning, but the $K$-mH algorithm
chooses two groups. 
%The Six-Gaussian dataset consists of six Gaussian groups each with different rotations such that two pairs of groups overlap.     Several common methods including single linkage when $K$ = 3, and $K$-means for $K$ = 2 produce good partitions for the Banana-clump dataset.  
\begin{figure}[h!t]
\centering
\vspace{-0.12in}
\hspace{-0.1in}
\mbox{
%\hspace{0.05in}
\subfloat[Bullseye Dataset]{
\includegraphics[angle=0,totalheight=2.5in]{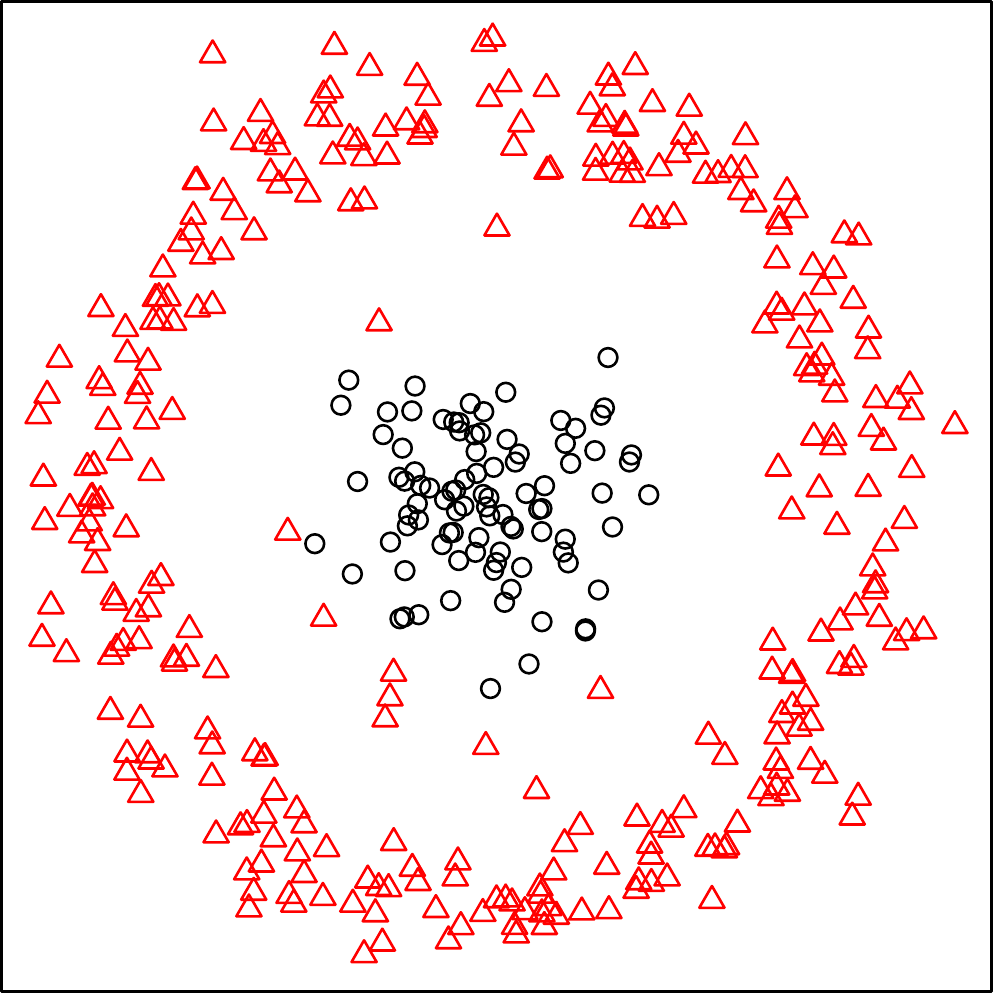}}
\subfloat[Heatmap of Partitioning]{
\includegraphics[angle=0,totalheight=2.5in]{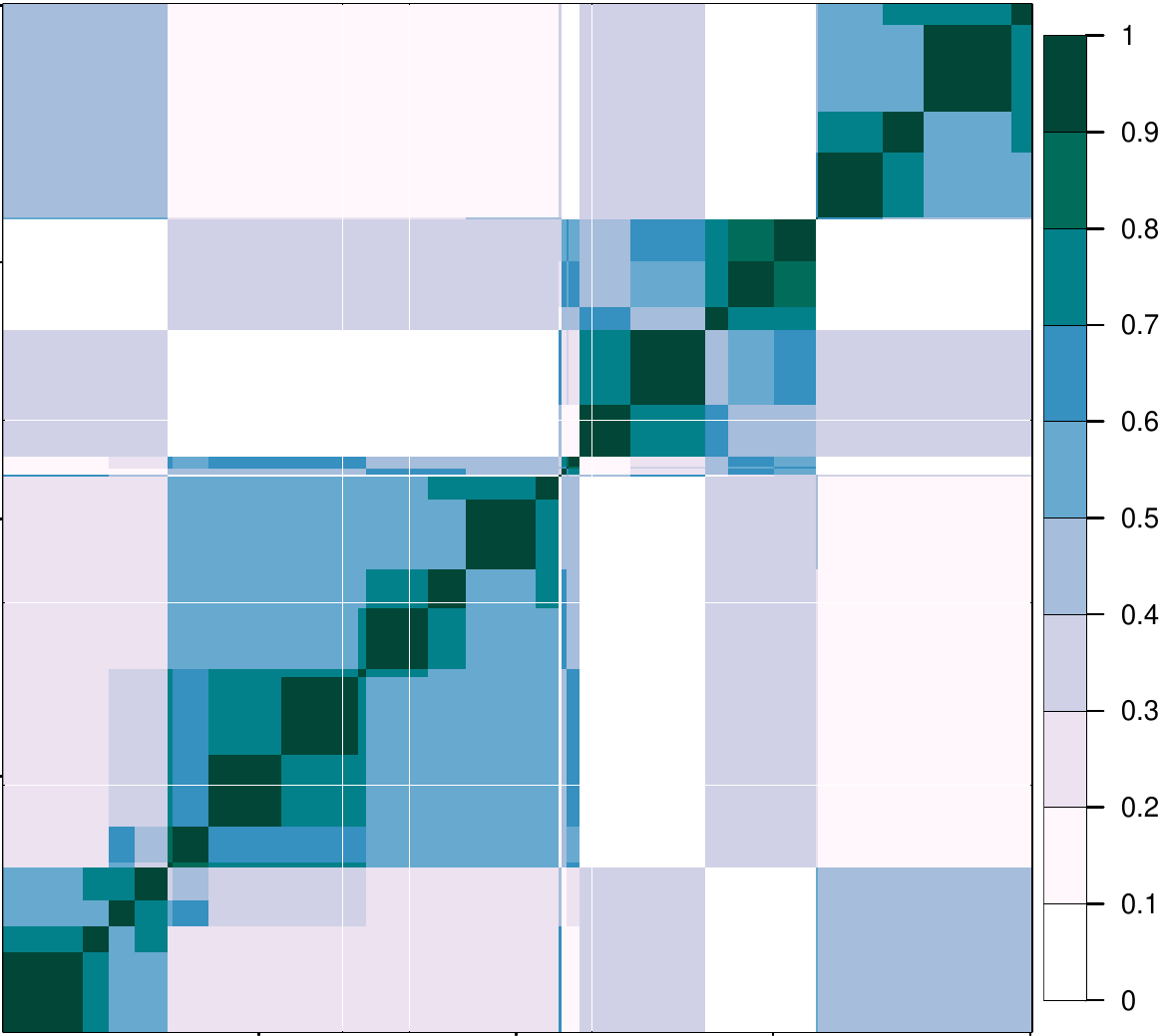}}}
\caption{(a) $K$-mH partitioning of the Banana-clump dataset 
  and (b) heatmap illustrating clustering uncertainty and stability.}
\label{bulleye}
\end{figure}
\begin{comment}
FJ, CM, GSL-NN and $K$-mH all produce good partitions for the
Banana-clump dataset with $\mathcal{R}$ over 0.98 in all cases. The
``elbow'' approach of CM suggests three groups splitting the banana
into two different groups.  In addition, GSL-NN does not give a clear
indication for the number of groups.   Figure \ref{heatmap_fig}a
displays the heatmap matrix for the Banana-clump dataset. In this
figure there are two large blocks however the block in the lower left
corner is split further into two green blocks surrounded by yellow.
Therefore, the heatmap displays the uncertainty of the number of
groups suggesting either two well-separated groups or three groups
such that two of the three groups are closer together.   
\end{comment}

Revisiting the Bullseye dataset of Figure~\ref{bull}, we find that 
FJ, GSL-NN and $K$-mH produce good
partitions~(Figure~\ref{bulleye}a) with
$\mathcal{R}\geq 0.99$ but DEMP, DEMP+ and CM perform poorly with the
outer ring broken into several further groups. The
heatmap~(Figure~\ref{bulleye}b) indicates a lot of uncertainty but
the methodology of Section~\ref{simmatrix} suggests two groups. 
%Neither the elbow approach of \citet{baudryetal08} nor GSL-NN
%suggested two groups for the Bullseye dataset.  Figures,
%\ref{heatmap_fig}b display the heatmaps for the Bullseye suggesting 2 groups.  
\subsubsection{The Banana-spheres dataset:} This 
dataset has two separated banana-shaped half rings of 250
observations each that are surrounded by a third group in the shape of
a full ring of 1500 observations. 
 The observations in each group were simulated using
 pseudo-random realizations from different bivariate normal
 distributions with means that followed the central path of each
 shape. An additional 15 outlying observations from each cluster were added to
 provide a dataset of 3015 observations. 
  \begin{figure}[!ht]
\centering
\hspace{-0.15in}
\mbox{
\subfloat[$\mathcal{R} = 0.99$, $K$ = 3]{
\includegraphics[angle=0,width=1.6in]{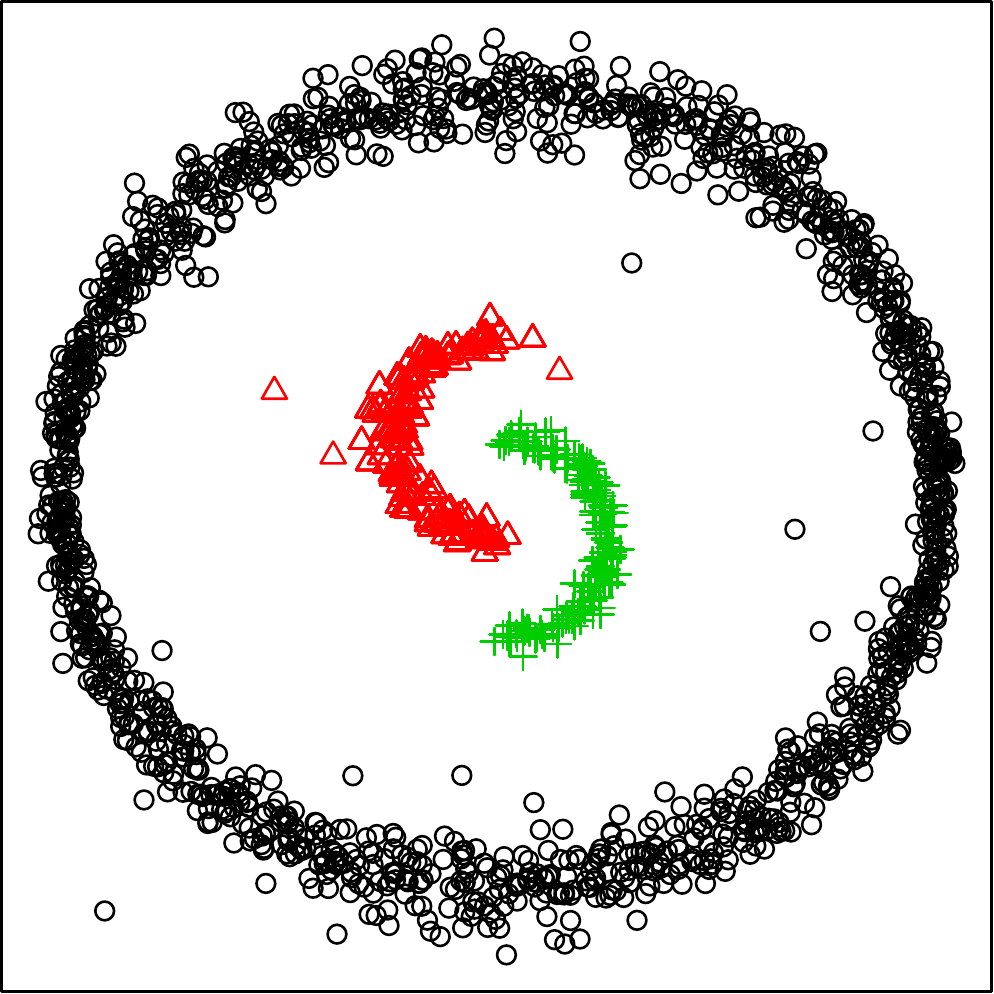}}
\subfloat[$\mathcal{R} = 0.95$, $K$ = 5]{
\includegraphics[angle=0,width=1.6in]{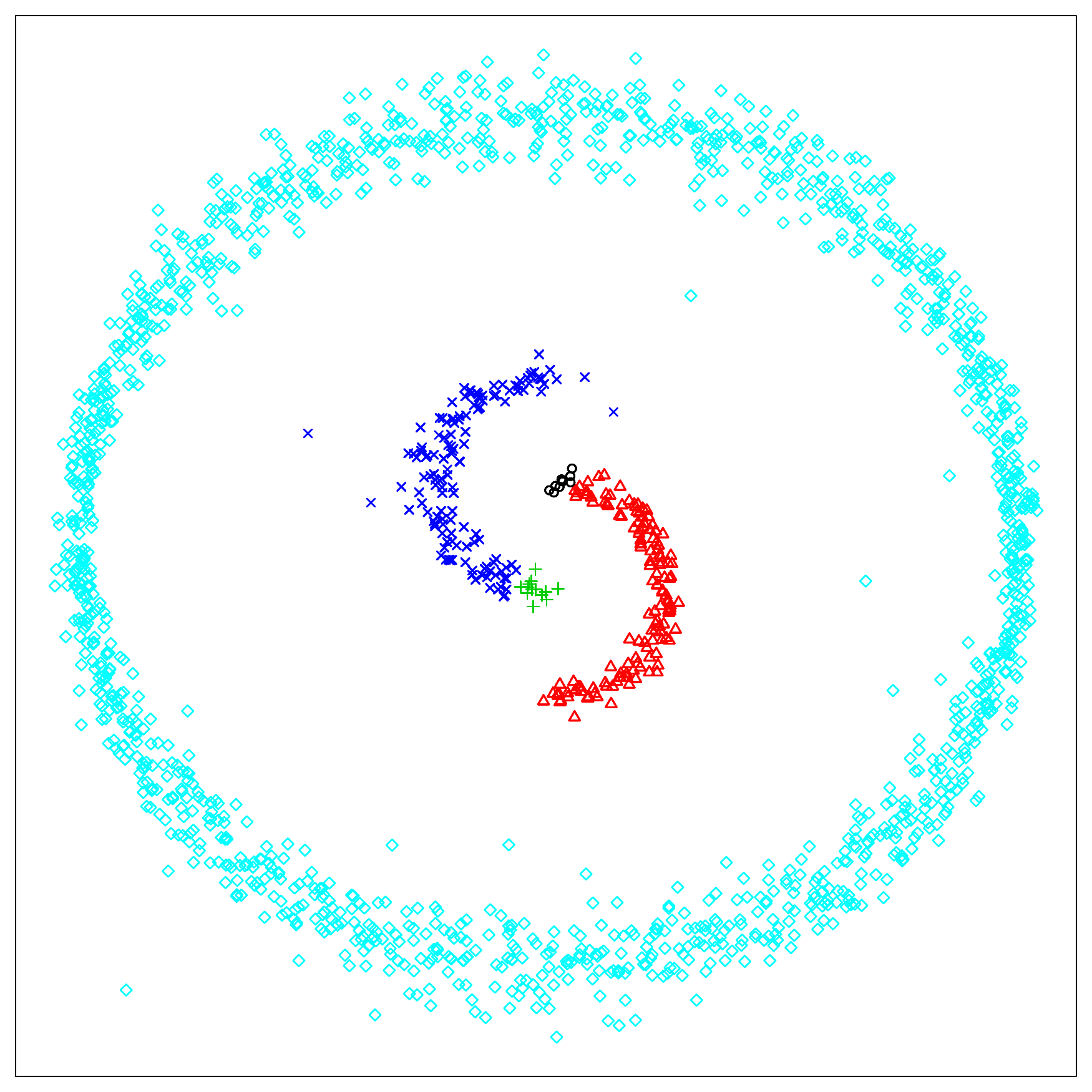}}
\subfloat[$\mathcal{R} = 0.74$, $K$ = 2]{
\includegraphics[angle=0,width=1.6in]{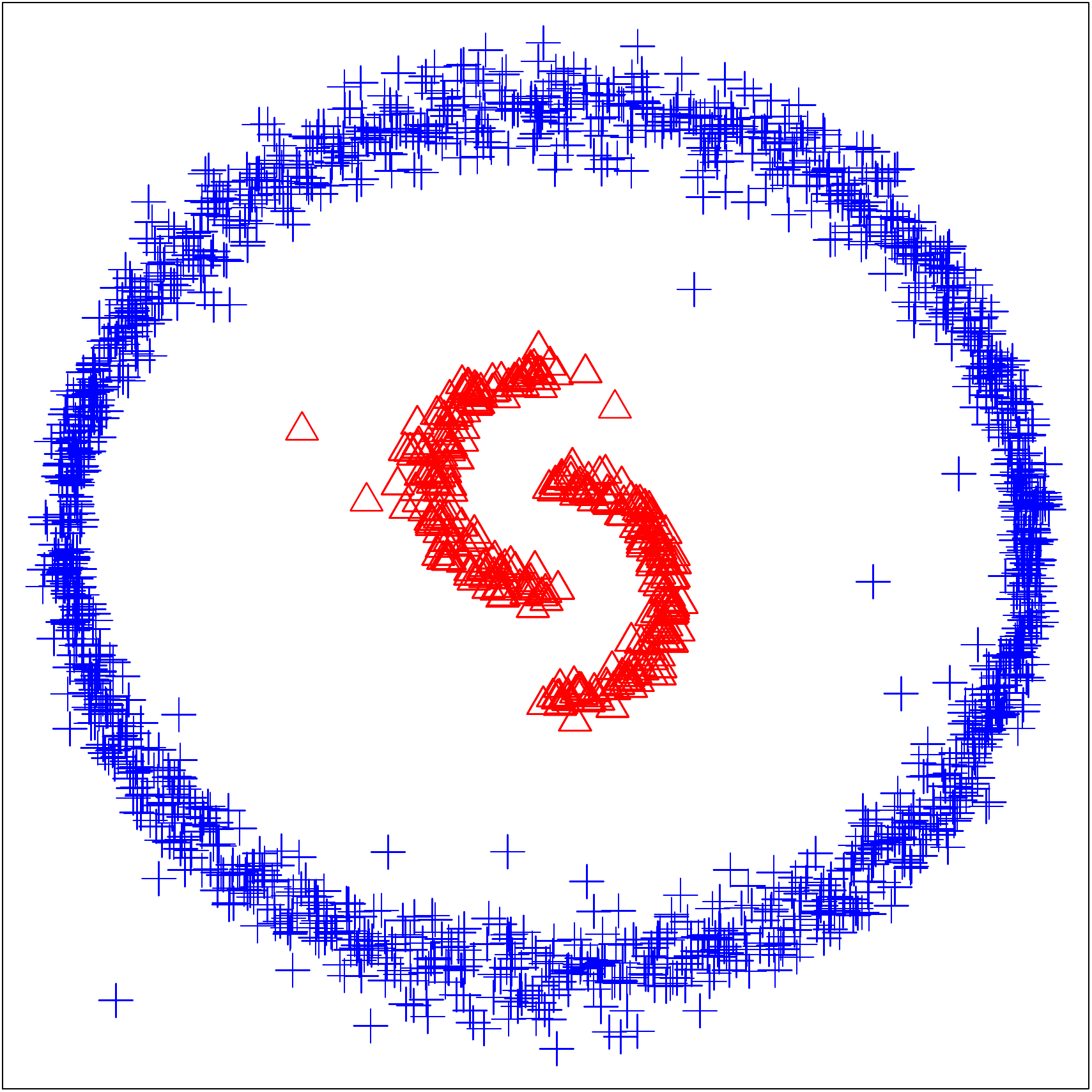}}
%\includegraphics[angle=0,width=1.6in]{datasetbanana.pdf}}}
%\hrule
\subfloat[$K$-mH heatmap]{
\includegraphics[angle=0,width=1.8in]{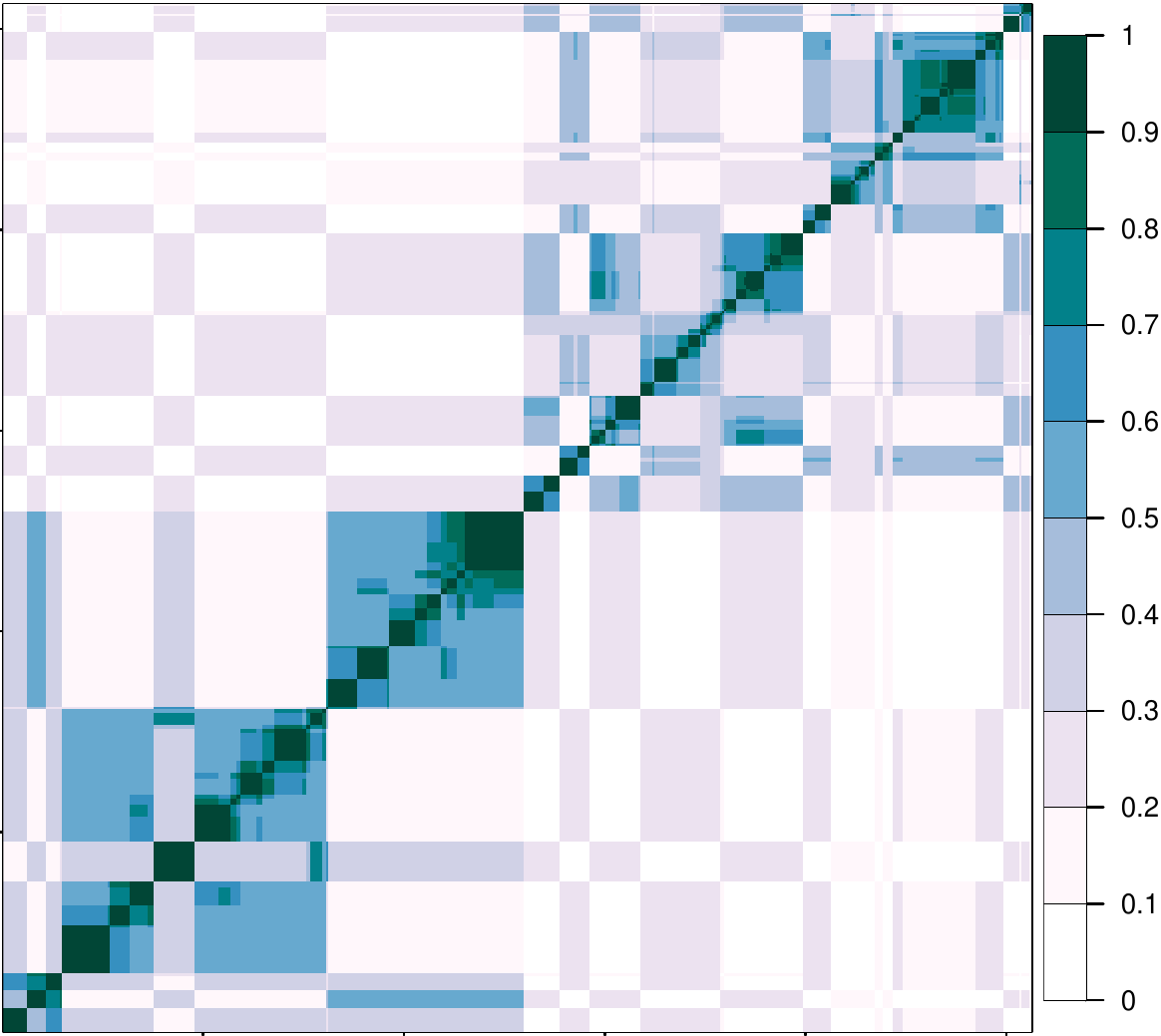}}
}
\caption{Top three partitionings of the Bananas-sphere dataset using
  (a) $K$-mH (b) FJ and (c) GSL-NN. Captions indicate
  estimated number of groups and $\mathcal{R}$ between estimated and true
  groupings. 
  (d) $K$-mH heatmap for stability of groupings.
}
\label{banana}
\end{figure}
%Both, \citet{fred05} and \citet{baudryetal08}'s methods perform well in terms of $\mathcal{R}$ when $K$ is five and six respectively, however, both methods find some smaller clusters that are not visually separated from one of the three larger groups.  GSL-NN and $K$-means hierarchical clustering produce good 3-group partitions with $\mathcal{R}$ between the partitions and the true over 0.99 in both cases.  
Figures \ref{banana}a-c display the top three performers. 
$K$-mH chooses three groups with $\mR=0.99$ while
FJ chooses a 5-groups partition: however,  the partitioning is still quite
good ($\mathcal{R}= 0.95$).  GSL-NN
suggests 2 clusters while the elbow plot of CM  provides $K$ = 11 and $\mR =
0.53$.  Both DEMP ($\mR= 0.29$)and DEMP+ ($\mR=0.45$) do worse.
%\begin{figure}[h!tp]
%\centering
%\mbox{\subfloat[Bananas-Sphere Dataset]{
%\includegraphics[angle=0,totalheight=2.7in,width=3in]{heatmap1_new_banana.pdf}}}
%\subfloat[4-bananas-a Dataset]{
%\includegraphics[angle=0,totalheight=2.7in,width=3in]{heatmap2banana2.pdf}}}
%\caption{Heatmap based on 18 partitions of the Bananas-Sphere dataset}
% and 4-bananas-a datasets.}
%\label{heat_banana}
%\end{figure}%approach does not provide a clearer indication for the number of groups.  The runt sizes for the GLS-NN method are, in descending order, 1506,  750,  518,  335,  176,  160,  147,  135,  129,  120,  119,  105,  105,$\ldots$, suggesting either 2 clusters (based on the break between 1506 and 750), 5 clusters (based on the gap between 335 and 176) or 12 clusters (based on the gap between 119 and 105). 
Further the heatmap (Figure~\ref{banana}d) shows the structure in the
dataset. While there are between 2 and 3 clear groups, there is also
indication of the complicated structure of each group as well as the
outliers.

\subsubsection{The SCX Dataset:}
This dataset has a variety of cluster shapes and sizes, with 
three separated C-shaped groups rotated at different angles,
a large S-shaped group and four small X-shaped groups.  Twenty
outlying observations are added to the clusters for a
total of 3420 observations. Here, $K$-mH partitioning~(Figure~\ref{Scross}) is
near-perfect (with two observations misclassified as scatter and not
displayed in the dataset) while FJ is the 
next best performer. CM, DEMP and DEMP+ perform similarly, but GSL-NN
finds 7 groups~($\mR = 0.53$) clusters, with the S and 4 crosses all
placed in one group and the two lower C's split into 2 and three
groups, respectively. The heatmap indicates uncertainty with
4 large groups with further definition and $K_*$  not easily
identified. This uncertainty is 
reflected in the estimated $K_*$s which were 7,  8,  9,
and 10, with frequency of occurrence 28, 48, 22, and 2\% of
the time, respectively. The median estimated $K_*=8$  
yields the perfect solution of Figure~\ref{Scross}a. 
\begin{comment}

 In this
example all methods perform poorly except for  $K$-mH.   The heatmaps
for the SCX dataset based on 18 partitions suggest the correct number
of groups with $K_*$ = 8. Furthermore, $K$-mH takes under 19 seconds
to complete.  In comparison CM has a completion time over ten minutes.  
\end{comment}
\begin{figure}[h!t]
\centering
\mbox{\subfloat[$\mathcal{R} = 0.99$, $K$ = 8]{
\includegraphics[angle=0,width=1.6in]{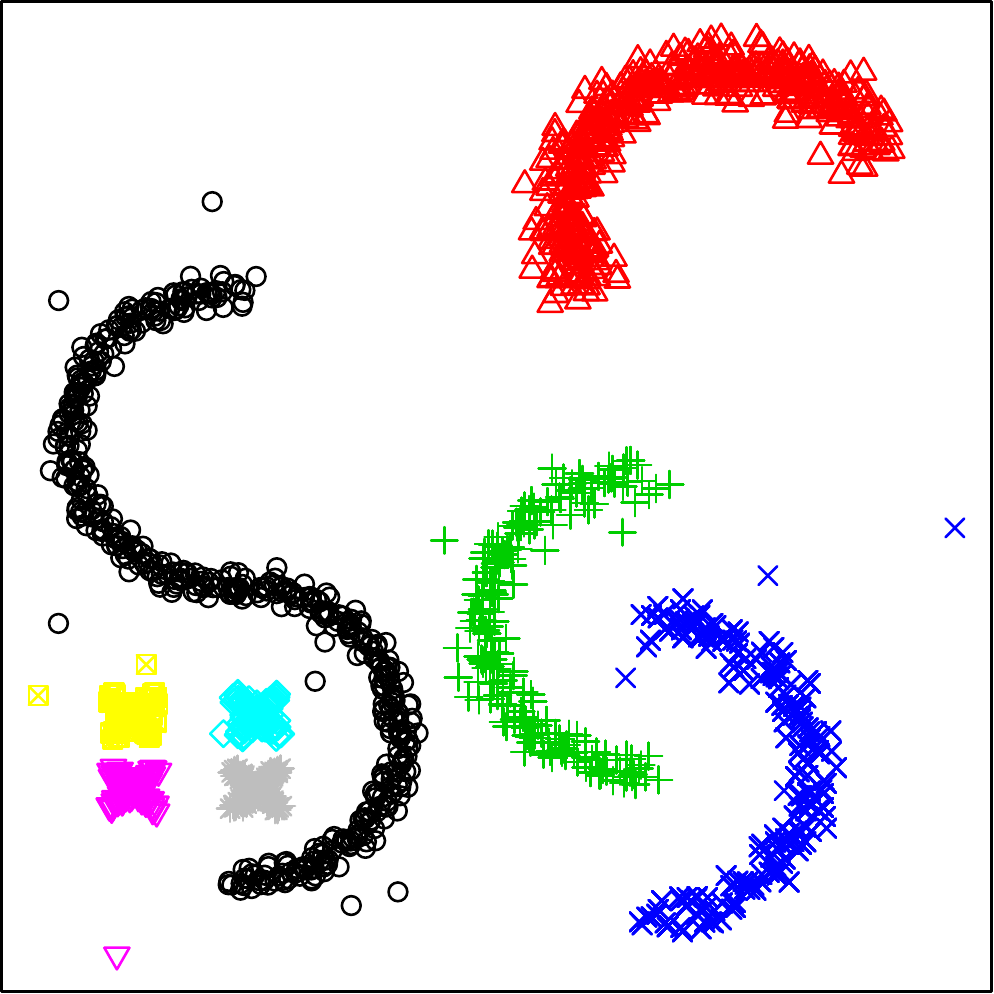}}
\subfloat[$\mathcal{R} = 0.89$, $K$ = 8]{
\includegraphics[angle=0,width=1.6in]{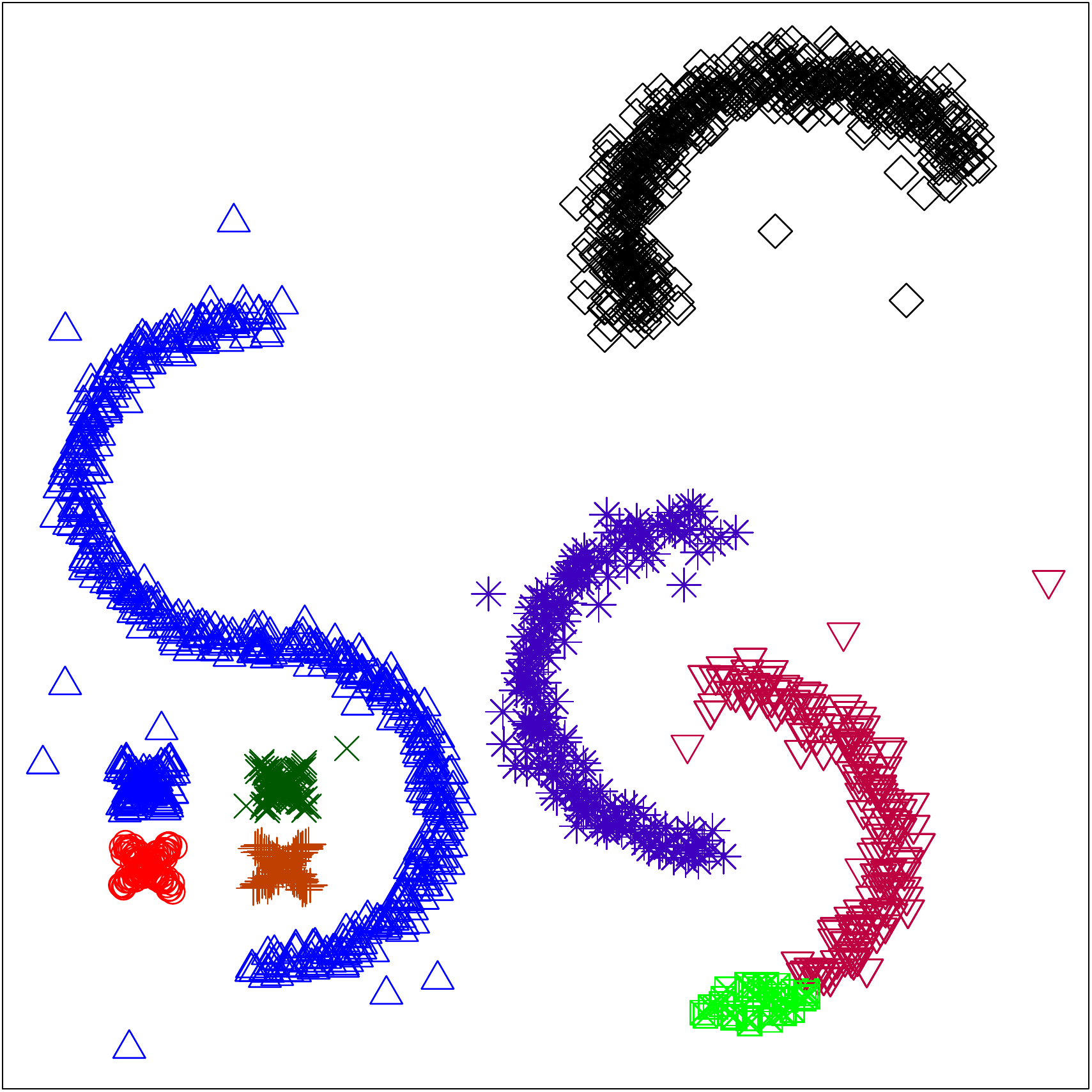}}
\subfloat[$\mathcal{R} = 0.78$, $K$ = 9]{
\includegraphics[angle=0,width=1.6in]{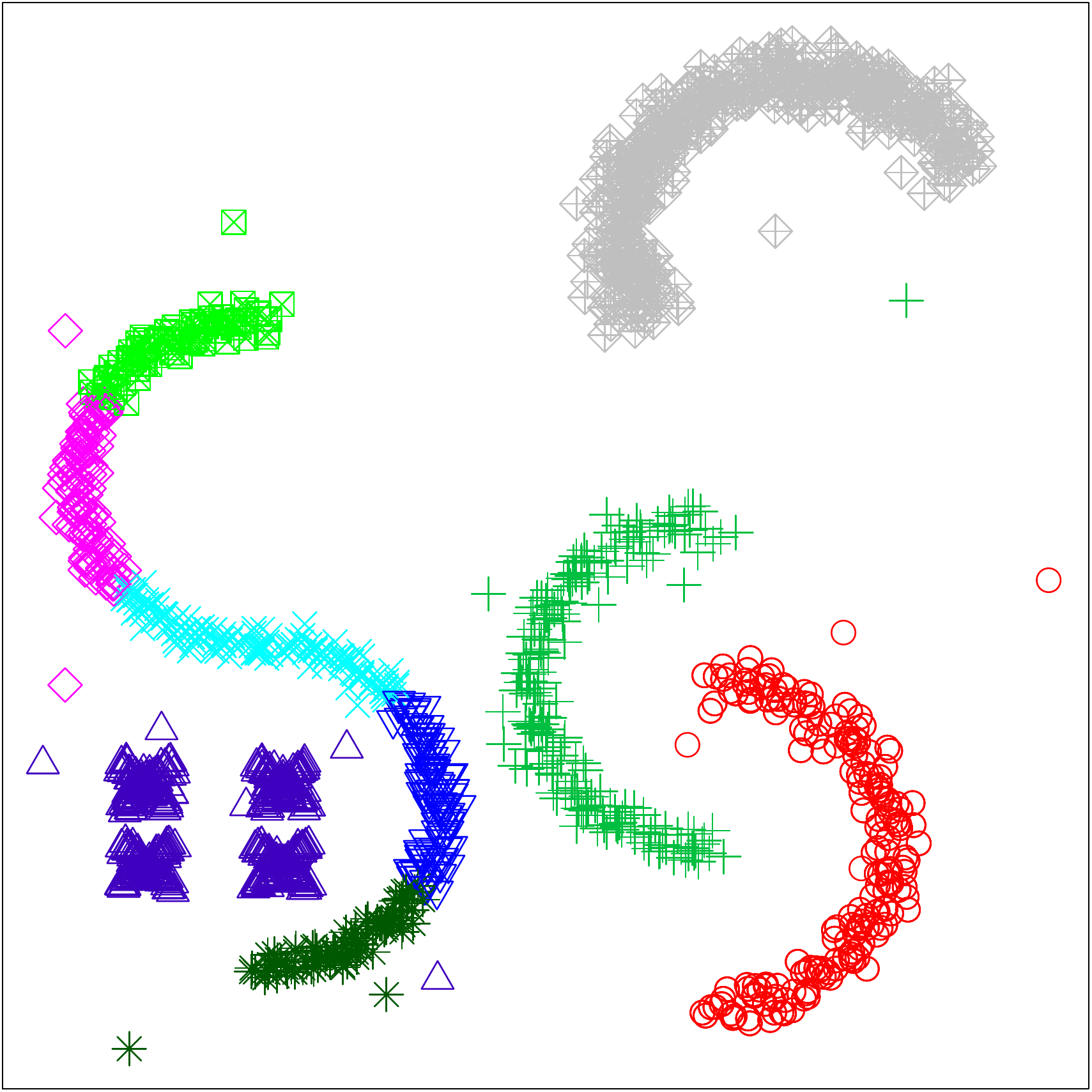}}
\subfloat[$K$-mH heatmap]{
\includegraphics[angle=0,width=1.8in]{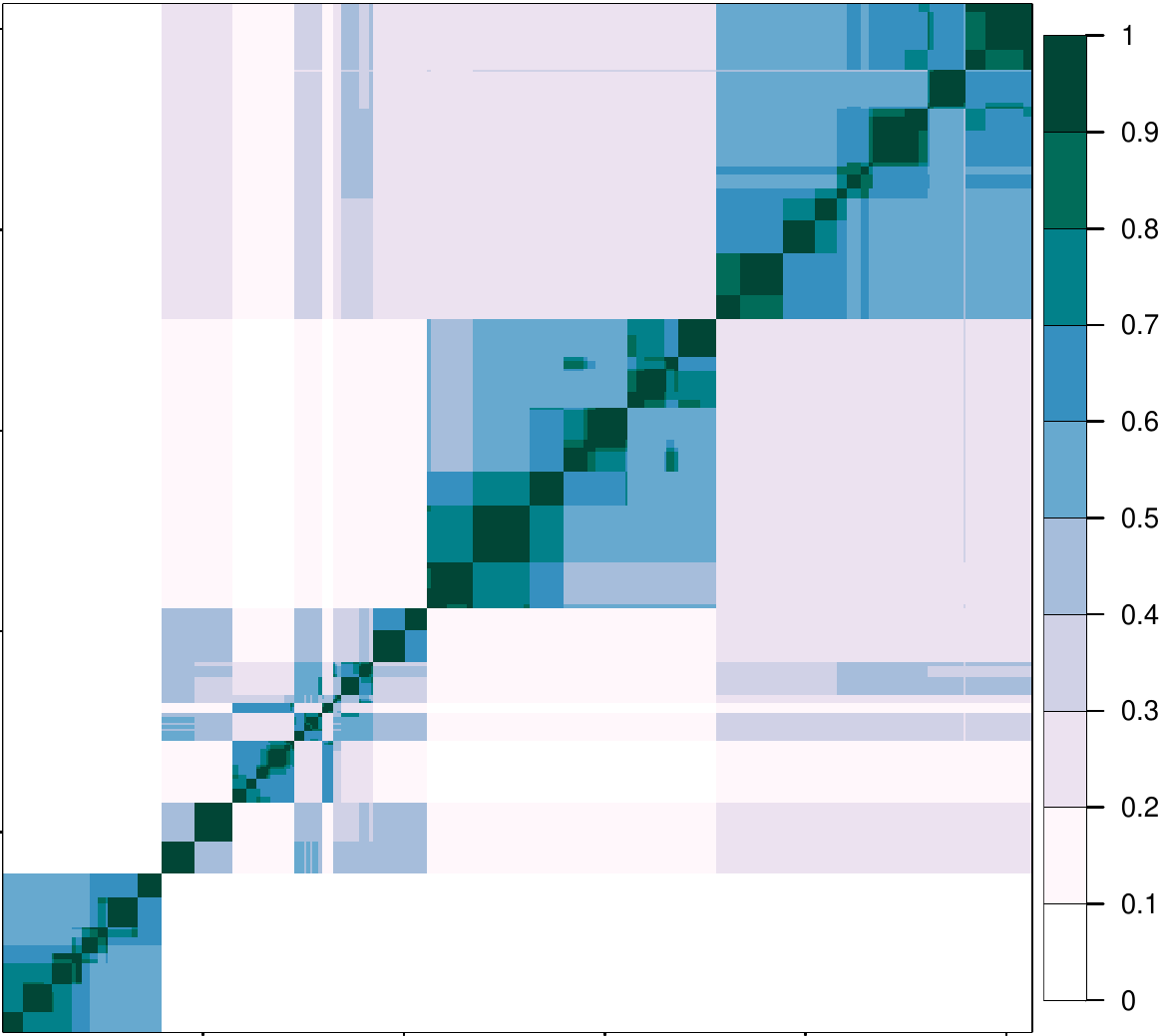}}}
\caption{Top three performers for SCX: (a) $K$-mH (b) FJ and (c) CM
  and (d) the $K$-mH heatmap.}
\label{Scross}
\end{figure}
\subsubsection{The Cigarette-Bullseye dataset} We have another
\begin{figure}[h]
\centering
\mbox{
\hspace{-0.1in}
\subfloat[$\mathcal{R} = 1.0$, $K$ = 8]{
\includegraphics[angle=0,totalheight=2.3in]{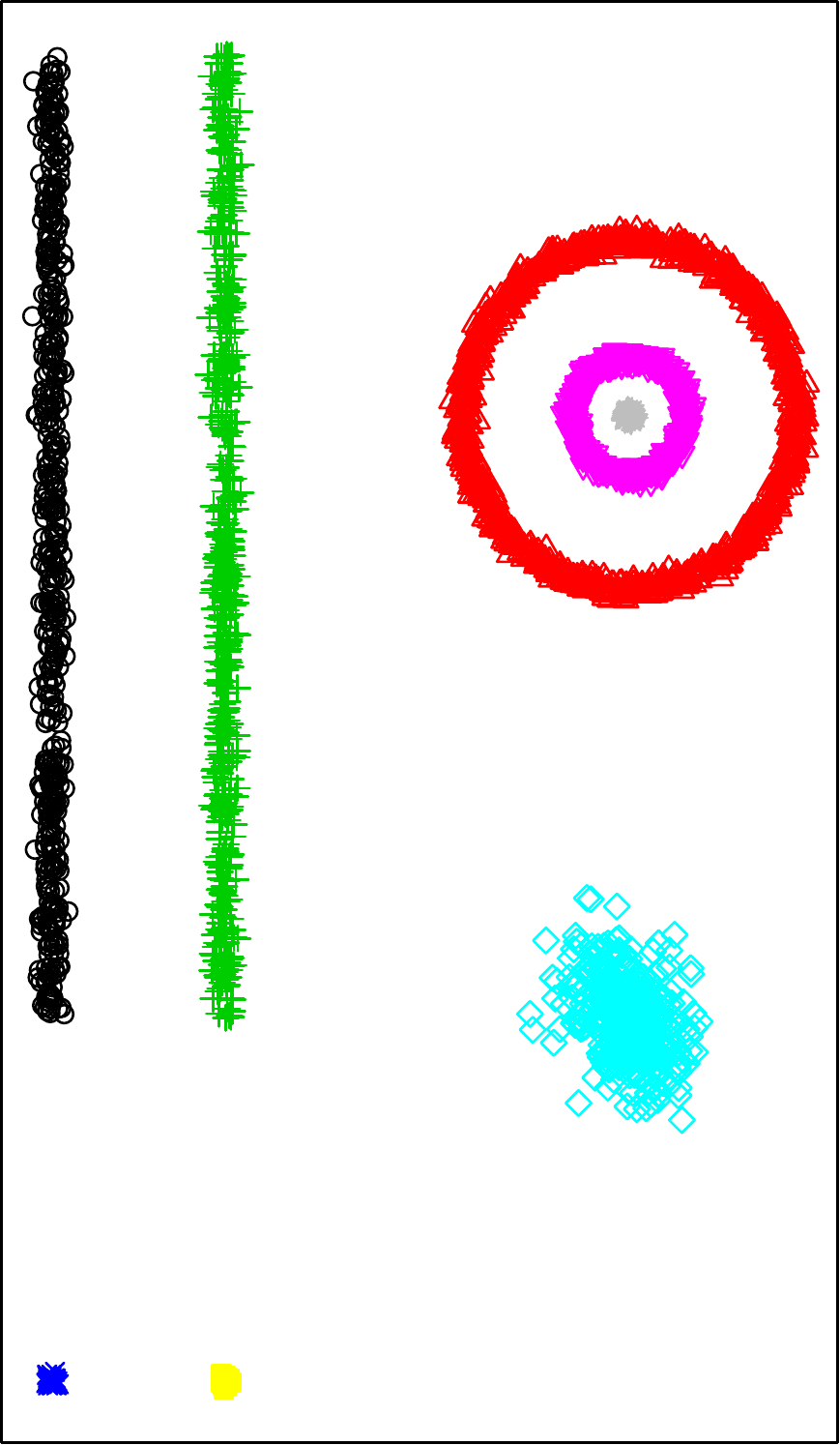}}
\subfloat[$\mathcal{R} = 0.99$, $K$ = 6]{
\includegraphics[angle=0,totalheight=2.3in, width=1.35in]{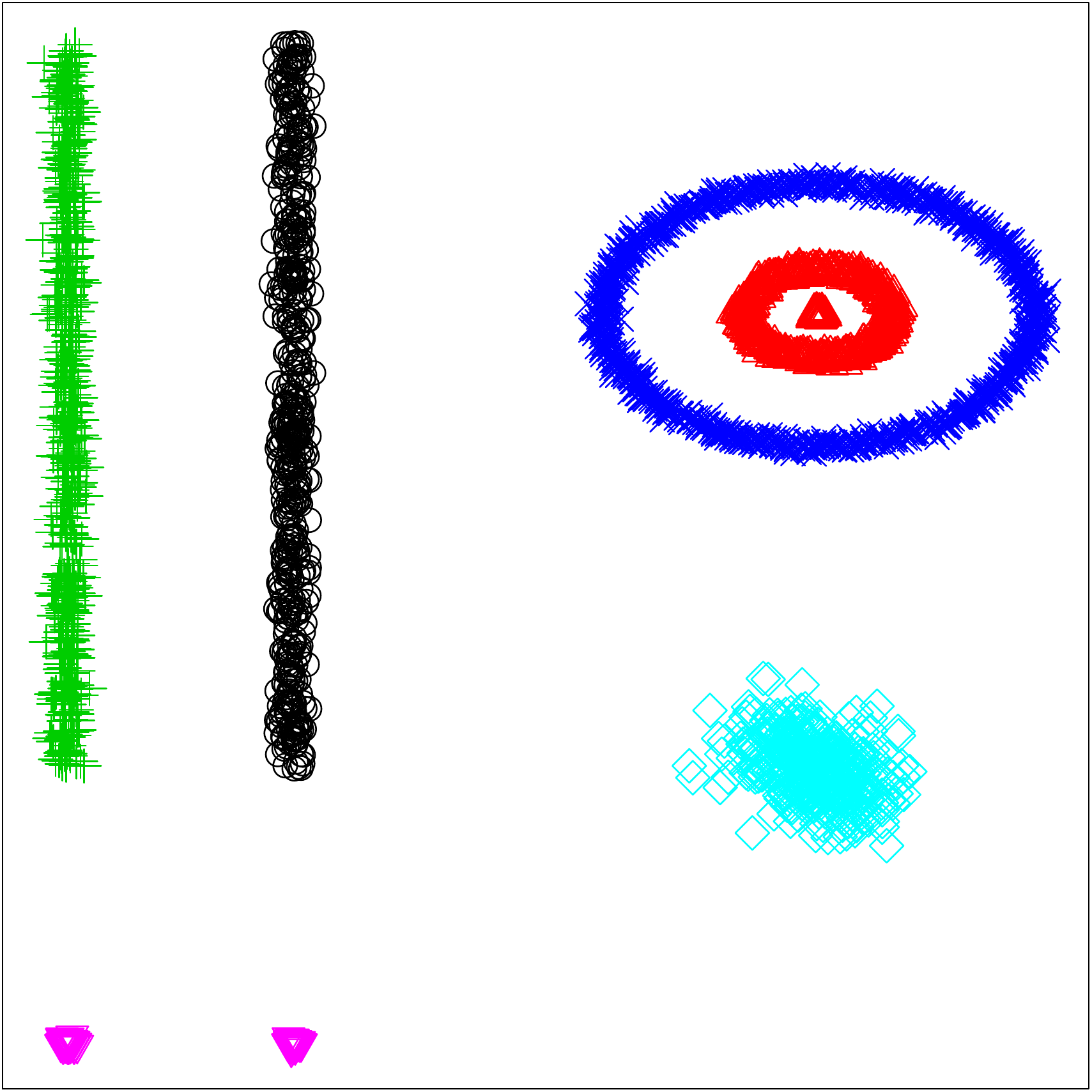}}
%\subfloat[$\mathcal{R} = 0.78$, $K$ = 6]{
%\includegraphics[angle=0,totalheight=2.8in,width=1.625in]{nugent_cigaret1.pdf}}
\subfloat[$\mathcal{R} = 0.96$, $K$ = 9]{
\includegraphics[angle=0,totalheight=2.3in,width=1.35in]{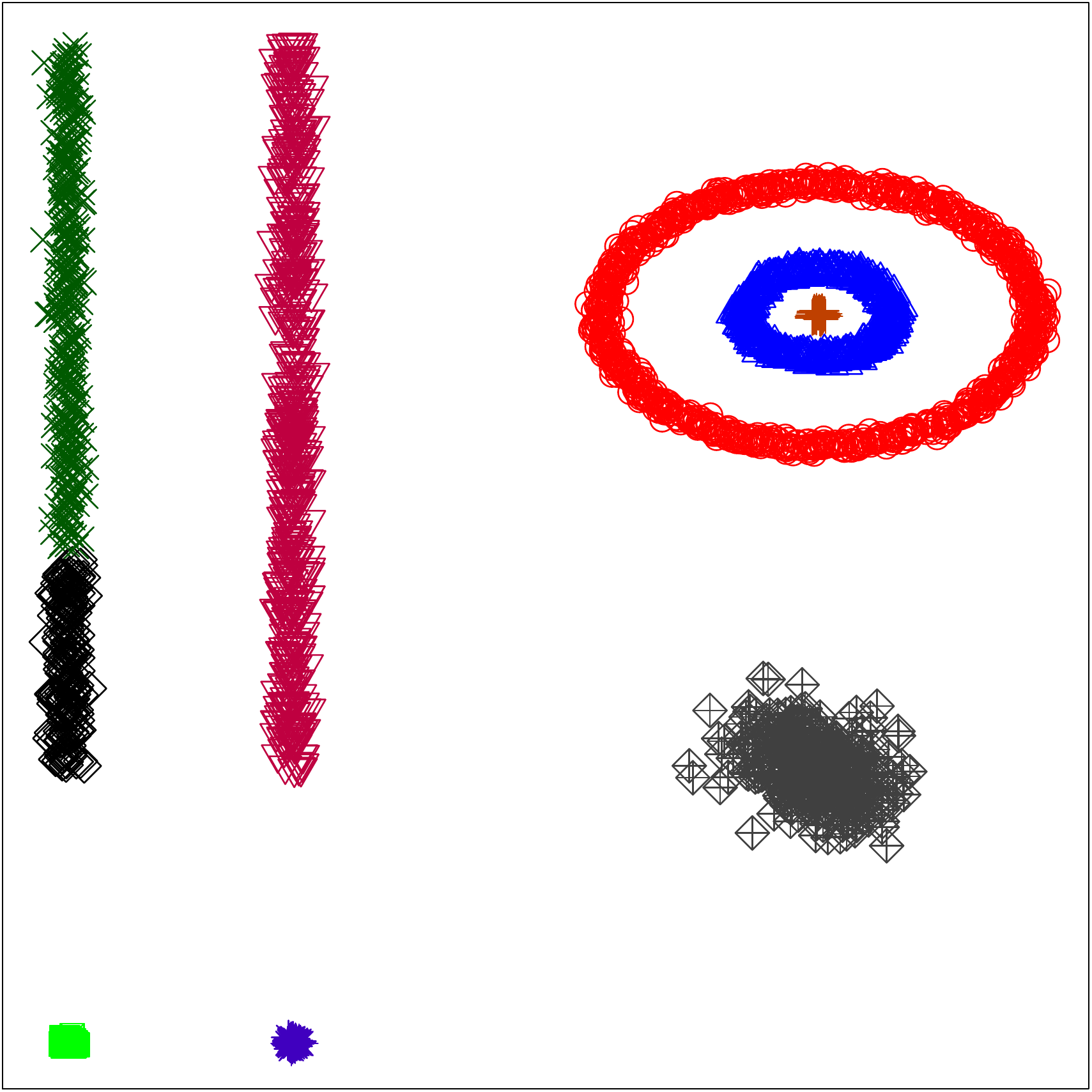}}
\subfloat[$K$-mH heatmap]{
\includegraphics[angle=0,totalheight=2.3in]{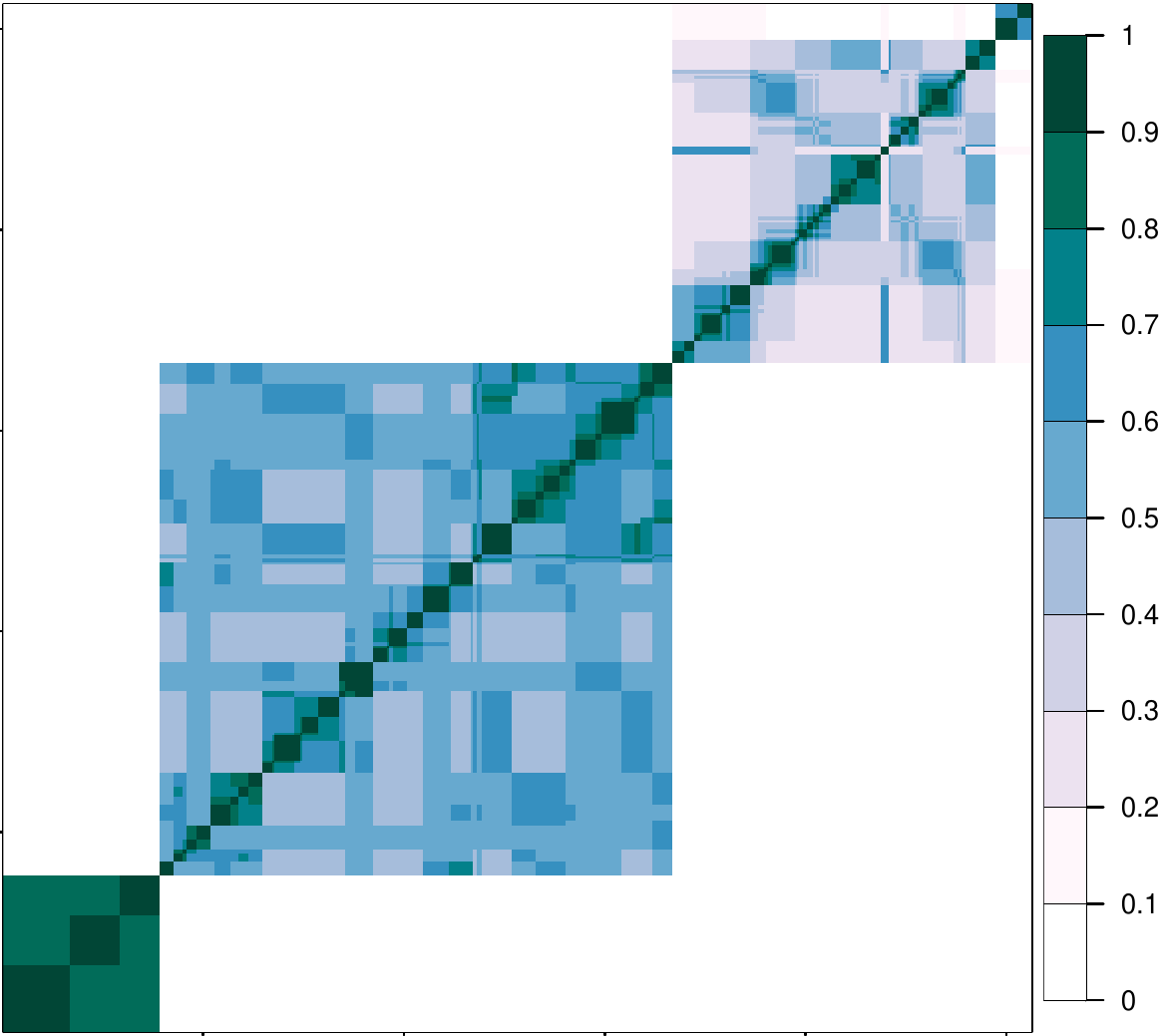}}
}
\caption{Top three performers on the Cigarette-Bullseye dataset: (a)
  $K$-mH, (b) CM, (c) FJ and (d) the $K$-mH heatmap.}
\label{cigbulls}
\end{figure}
 complex-structured dataset with 3 concentric ringed groups, 2 long
groups above 2 small spherical ones and 1 group that is actually a
superset of 2 overlapping Gaussian 
groups. $K$-mH and FJ perform similarly while CM finds 6 clusters but
$\mR=0.99$ because the smaller groups are the ones not identified
clearly. GSL-NN also underestimates the number of groups to be 6, with
$\mR=0.78$. Both DEMP $(\mR = 0.62)$ and DEMP+ $(\mR = 0.64)$ exhibit
poorer performance. The heatmap has similar characteristics as 
SCX, with 3-4 large groups but no clear choice for $K_*$ beyond that
even though there are suggestions of sub-groups within each of the
large groups. However, 
estimates of $K_*$ were 8 (50\% of the time),  9 (42\%) and 10 (8\% of
the time). The median $K_*= 8$ yields  the perfect $K$-mH solution of
Figure~\ref{cigbulls}a while $K_*=9$ breaks the leftmost long cluster
further into two groups, yielding a similar partitioning as FJ (Figure~\ref{cigbulls}c).
\subsection{Higher-Dimensional Datasets}
We next present performance evaluations on three higher-dimensional
datasets often used in the literature.  
\subsubsection{Olive Oils:}
This  dataset~\citep{forinaandtiscornia82,forinaetal83} has
measurements on 8 chemical components for 572 samples of olive
oil taken from 9 different areas in Italy which are from three 
regions: Sardinia and Northern and Southern Italy. 
For this dataset, GSL-NN is the only method that identifies 9
groups ($\mR=0.61$) while FJ identifies 8 groups ($\mR=0.54$). CM ($\mR=0.75$),
DEMP ($\mR=0.82$)and DEMP+ ($\mR=0.85$) are the best ($\mR=0.75$) performers
even though they identify only 7 groups. The visualization step of the $K$-mH
algorithm on the other hand largely identifies 8 kinds of olive
oils (88\% of the time) and also 7 (2\%) and 9 (12\%) kinds of olive oils. The
median estimated $K_*=11$ yields a partitioning with $\mR = 0.67$. A 
closer look at the $K$-mH partitions reveals that oils from the southern
areas of Calabria, Sicily and South Apulia are mainly grouped together in our
second cluster while the remaining southern area of North-Apulia
primarily populates our ninth cluster. Coastal and Inland Sardinian olive oils are identified very well by our
groupings. 
% latex table generated in R 3.4.1 by xtable 1.8-2 package
% Sun Nov  5 06:45:59 2017
\begin{figure}[h]
   \centering
    \subfloat{
      \raisebox{-.5\height}{\includegraphics[width=0.3\textwidth]{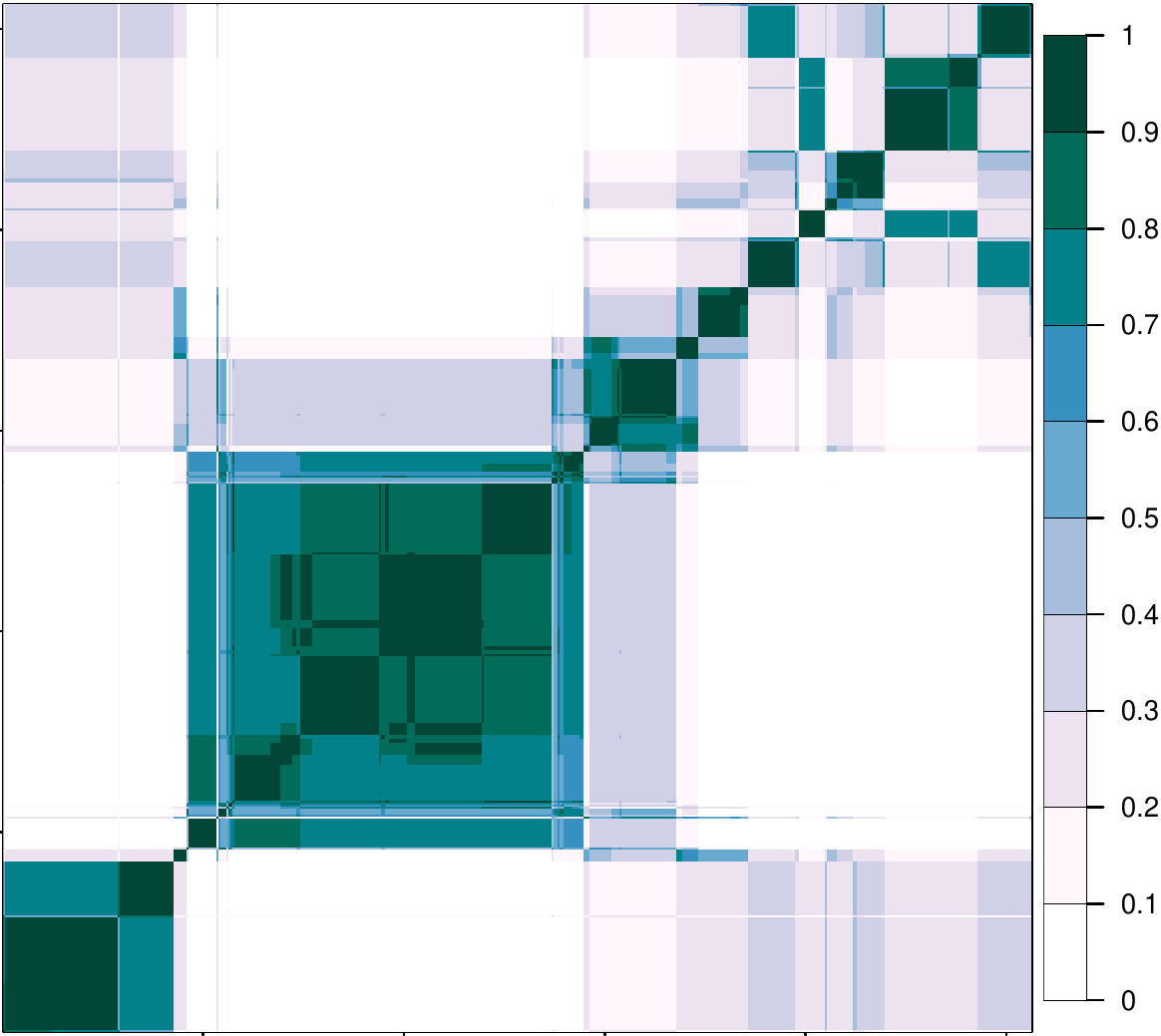}}
    }
    \subfloat{\small
          \centering
\begin{tabular}{ll rrrrrrrrrrr}
  \toprule
Region  &  Area    & \multicolumn{1}{r}{   1} & \multicolumn{1}{r}{   2} & \multicolumn{1}{r}{   3} & \multicolumn{1}{r}{   4} & \multicolumn{1}{r}{   5} & \multicolumn{1}{r}{   6} & \multicolumn{1}{r}{   7} & \multicolumn{1}{r}{   8} & \multicolumn{1}{r}{   9} & \multicolumn{1}{r}{  10} & \multicolumn{1}{r}{  11} \\ 
   \midrule
North    & East Liguria    &   0 &   0 &   4 &   8 &  12 &   0 &   0 &   0 &   0 &   0 &  26 \\ 
           & Umbria          &   0 &   0 &  51 &   0 &   0 &   0 &   0 &   0 &   0 &   0 &   0 \\ 
           & West Liguria    &   0 &   0 &   0 &   0 &  37 &   0 &   0 &   0 &   0 &  13 &   0 \\ 
  Sardinia & Coastal Sardinia  &   0 &   0 &   0 &   0 &   0 &   0 &   3 &  30 &   0 &   0 &   0 \\ 
           & Inland Sardinia &   0 &   0 &   0 &   0 &   0 &   0 &  65 &   0 &   0 &   0 &   0 \\ 
  South    & Calabria        &   0 &  55 &   0 &   0 &   0 &   0 &   0 &   0 &   0 &   0 &   1 \\ 
           & North Apulia    &   0 &   1 &   0 &   0 &   0 &   0 &   0 &   0 &  24 &   0 &   0 \\ 
           & Sicily          &   5 &  17 &   0 &   0 &   0 &   8 &   0 &   0 &   6 &   0 &   0 \\ 
           & South Apulia    &   2 & 204 &   0 &   0 &   0 &   0 &   0 &   0 &   0 &   0 &   0 \\ 
   \bottomrule

   \bottomrule
\end{tabular}
}
     \caption{The $K$-mH heatmap and the results by region and area
       obtained from $K$-mH clustering of the Olive Oils dataset.}
\label{oliveoils}
\end{figure}
Our partitioning aligns very well with the three regions
with  our groups 3, 4, 5, 10 and 11 (with the exception of one oil)
all exclusively from the north, 
groups 7 and 8 from Sardinia and groups 1, 2, 6 and 9 exclusively from the
south. The near-perfect embedding of our 
groups within the three regions indicates that the nine areas drawn using
political geography may not distinguish the different
kinds of olive oils as well as a different characterization using 
a different set of sub-regions that are based on physical geography.
\subsubsection{Zipcode Images:} \label{zip}
The zipcode images dataset made available by \citet{nugent10} has been
used in machine learning to evaluate clustering and classification
algorithms and consists of 2000 $16\times16$ images of handwritten
Hindu-Arabic numerals. Thus, $p=256$ here. \citet{nugent10} report
that GSL-NN ``vaguely'' finds 9 groups ($\mR=0.64$) but that their
10-groups solution is worse ($\mR=0.54$). We normalized the
measurements for each digit to have zero mean and unit variance so
that the Euclidean distance between any two observations is 
negatively but affinely related to the correlation between them. We
reduced dimensions by principal components analysis and used the
projection of the observations into the space spanned by the first
54 principal components which explain at least 90\% of the variation in the
data. This dataset is perhaps too cumbersome for CM, DEMP and DEMP+ while  
FJ finds 6 groups but the assignment is not very far from
random~($\mR=0.05$). 
\begin{figure}[h]
   \centering
    \subfloat{
      \raisebox{-.5\height}{\includegraphics[width=0.5\textwidth]{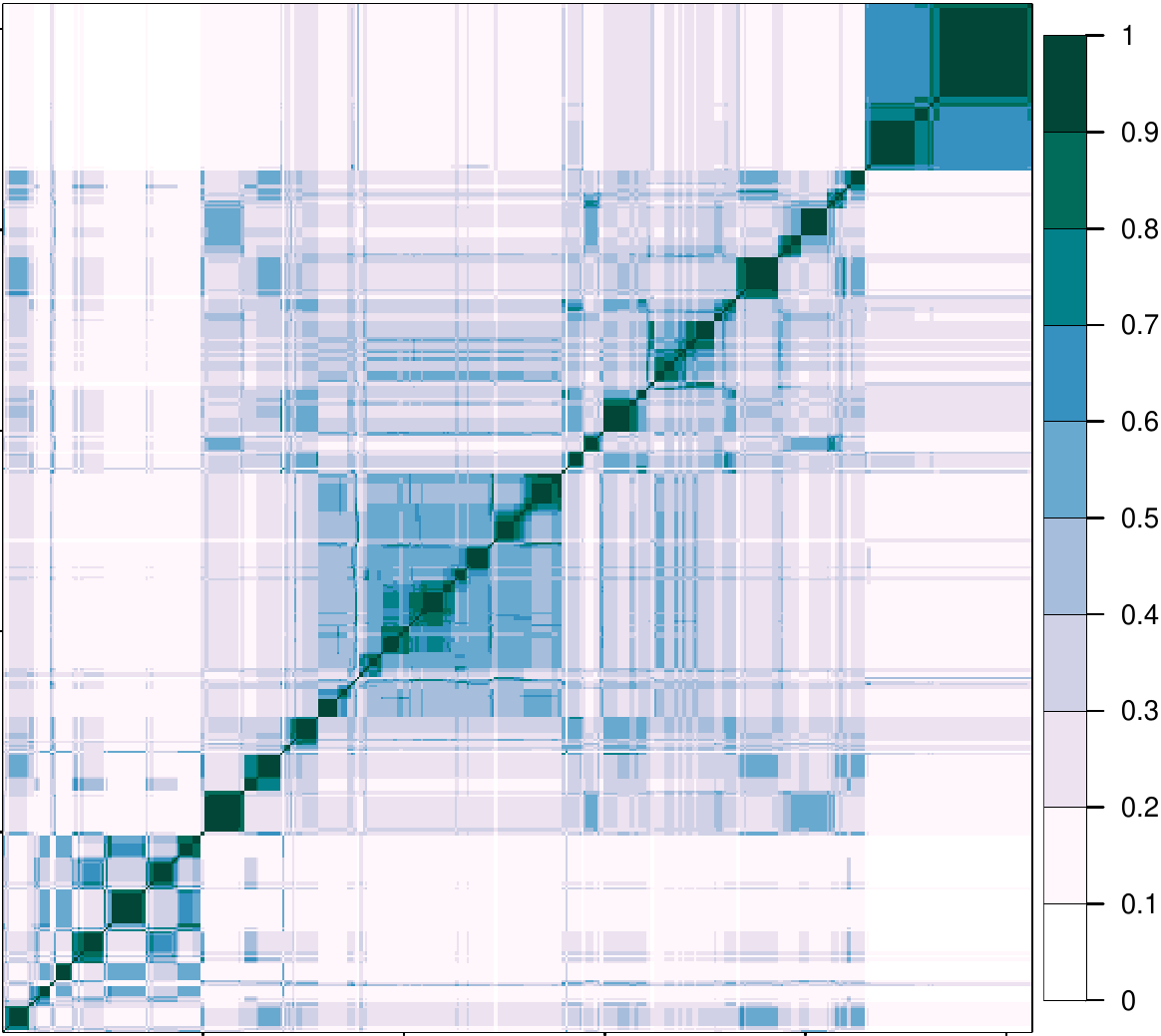}}
    }
    \subfloat{
      % latex table generated in R 3.4.1 by xtable 1.8-2 package
% Sun Nov  5 07:58:39 2017
\tiny
\centering
\begin{tabular}{ll rrrrrrrrrr}
  \toprule
&& \multicolumn{9}{c}{\bf Zipcode Digit}\\
Group & & \multicolumn{1}{r}{   0} & \multicolumn{1}{r}{   1} & \multicolumn{1}{r}{   2} & \multicolumn{1}{r}{   3} & \multicolumn{1}{r}{   4} & \multicolumn{1}{r}{   5} & \multicolumn{1}{r}{   6} & \multicolumn{1}{r}{   7} & \multicolumn{1}{r}{   8} & \multicolumn{1}{r}{   9} \\ 
     &            & \multicolumn{1}{r}{    } & \multicolumn{1}{r}{    } & \multicolumn{1}{r}{    } & \multicolumn{1}{r}{    } & \multicolumn{1}{r}{    } & \multicolumn{1}{r}{    } & \multicolumn{1}{r}{    } & \multicolumn{1}{r}{    } & \multicolumn{1}{r}{    } & \multicolumn{1}{r}{    } \\ 
   \midrule
1  &            &   5 &   0 &   0 &   0 &   0 &   2 &  51 &   0 &   0 &   0 \\ 
  2  &            &   0 &   0 &   0 &   0 &   0 &   0 &  45 &   0 &   0 &   0 \\ 
  3  &            &   0 &   0 &   0 &  62 &   0 &   0 &   0 &   0 &   2 &   0 \\ 
  4  &            &   0 & 323 &   0 &   0 &  19 &   2 &   2 &   2 &   2 &   0 \\ 
  5  &            &  48 &   0 &   1 &   0 &   0 &   2 &   0 &   0 &   1 &   0 \\ 
  6  &            &   0 &   0 &   0 &   1 &  35 &   0 &   0 &   2 &   1 &  10 \\ 
  7  &            &   0 &   0 &   0 &   1 &   0 &  36 &   0 &   0 &   0 &   0 \\ 
  8  &            &   1 &   0 &   1 &   4 &   0 &   2 &   0 &   2 & 115 &   1 \\ 
  9  &            &   0 &   0 &  41 &   0 &   0 &   0 &   0 &   0 &   0 &   0 \\ 
  10 &            &   0 &   0 &   1 &   0 &  73 &   0 &   0 &   0 &   1 &   0 \\ 
  11 &            &   0 &   0 &   0 &   0 &   0 &   0 &  25 &   0 &   0 &   0 \\ 
  12 &            &   1 &   0 &   1 &   6 &   0 &  18 &   0 &   0 &   0 &   0 \\ 
  13 &            & 135 &   0 &   4 &   0 &   0 &   3 &   0 &   0 &   1 &   0 \\ 
  14 &            &   0 &   0 &   0 &   0 &   0 &   0 &   0 &  52 &   0 &   1 \\ 
  15 &            &   1 &   0 &  38 &   0 &   1 &   1 &   0 &   0 &   0 &   0 \\ 
  16 &            &   0 &   0 &   1 &   0 &   4 &   0 &   0 &  12 &   1 &  40 \\ 
  17 &            &   0 &   0 &  57 &   0 &   0 &   0 &   0 &   0 &   0 &   0 \\ 
  18 &            &   0 &   0 &  33 &   0 &   0 &   0 &   0 &   0 &   2 &   0 \\ 
  19 &            &   0 &   0 &   1 &   0 &   0 &   2 &   0 &  19 &   2 &   2 \\ 
  20 &            &  59 &   0 &   1 &   0 &   0 &   1 &   1 &   0 &   0 &   0 \\ 
  21 &            &   1 &   0 &   4 &  56 &   0 &   3 &   0 &   0 &   2 &   0 \\ 
  22 &            &  69 &   0 &   0 &   0 &   0 &   0 &   0 &   0 &   0 &   0 \\ 
  23 &            &   1 &   0 &   0 &   0 &   0 &   0 &  41 &   0 &   1 &   0 \\ 
  24 &            &   0 &   0 &   0 &   1 &   5 &   0 &   0 &   7 &   1 &  59 \\ 
  25 &            &  68 &   0 &   0 &   0 &   0 &   0 &   0 &   0 &   0 &   0 \\ 
  26 &            &   0 &   0 &   0 &   0 &   0 &  29 &   1 &   0 &   1 &   0 \\ 
  27 &            &   0 &   0 &   2 &   0 &   0 &   0 &   0 &  76 &   0 &   5 \\ 
  28 &            &   0 &   0 &  33 &   0 &   0 &   0 &   0 &   0 &   0 &   0 \\ 
  29 &            &   0 &   0 &   1 &  18 &   0 &   0 &   0 &   0 &  24 &   1 \\ 
  30 &            &   0 &   0 &   0 &   0 &   6 &   1 &   0 &  10 &   1 &  49 \\    \bottomrule
\end{tabular}

}
\caption{The $K$-mH heatmap and the results by digit
  obtained from $K$-mH clustering of the Zipcode dataset.}
\label{zipcodes}
\end{figure}
The $K$-mH heatmap~(Figure~\ref{zipcodes}) indicates 
lack of clarity in the number of groups with $K_*$ chosen at between
29 and 30 most of the time. The median $K_*=30$  yields the
grouping ($\mR=0.54$) of Figure~\ref{zipcodes}. Inspection indicates
five main types of handwritten digits for 0 and 2, four kinds for 6,
three kinds of 4 and 9, two major kinds of 3, 5 and 7 and one major kind
for each of 1 and 8. Our groups correspond very reasonably to
handwriting styles for digits and are very interpretable. 
\subsubsection{Handwritten Digits:}
The Handwritten Digits
dataset~\citep{alimoglu96,alimogluandalpaydin96} available
from~\citet{newmanetal98} measured 16 attributes from
250 handwritten samples of 30 writers. With eight samples
unavailable, this dataset has 10992 records. 
We used the first  7 principal component
scores which explained 90\% of the variation in the dataset. We were
only able to apply FJ and $K$-mH (the other methods all threw up
errors). FJ identified 10 groups  
\begin{figure}[h]
   \centering
    \subfloat{
      \raisebox{-.5\height}{\includegraphics[width=0.4\textwidth]{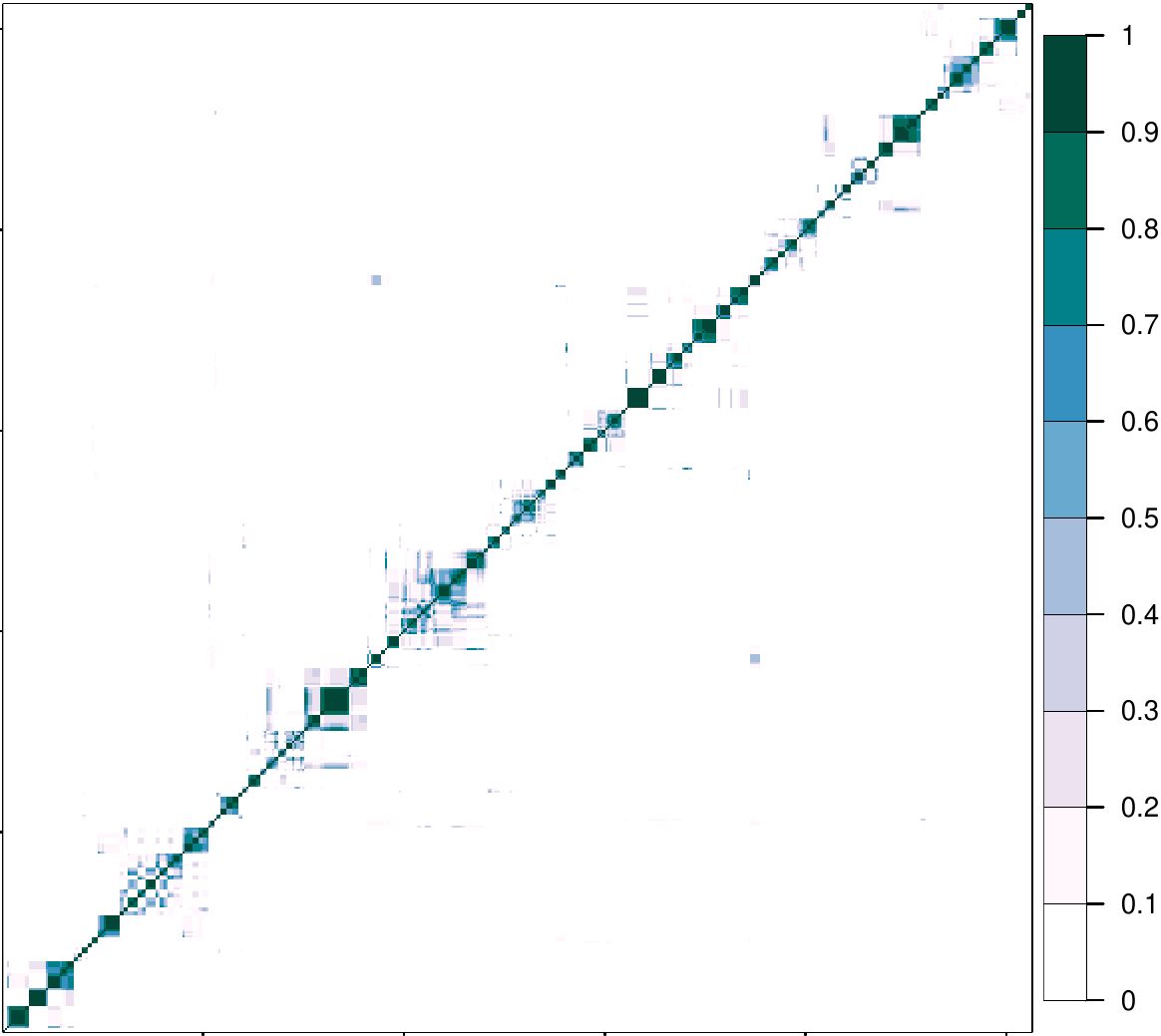}}
    }
    \subfloat{
      % latex table generated in R 3.4.1 by xtable 1.8-2 package
% Sun Nov  5 07:58:39 2017
\tiny
\centering
\begin{tabular}{ll rrrrrrrrrr}
  \toprule
&& \multicolumn{9}{c}{\bf Handwritten Digit}\\
Group & & \multicolumn{1}{r}{   0} & \multicolumn{1}{r}{   1} & \multicolumn{1}{r}{   2} & \multicolumn{1}{r}{   3} & \multicolumn{1}{r}{   4} & \multicolumn{1}{r}{   5} & \multicolumn{1}{r}{   6} & \multicolumn{1}{r}{   7} & \multicolumn{1}{r}{   8} & \multicolumn{1}{r}{   9} \\ 
     &            & \multicolumn{1}{r}{    } & \multicolumn{1}{r}{    } & \multicolumn{1}{r}{    } & \multicolumn{1}{r}{    } & \multicolumn{1}{r}{    } & \multicolumn{1}{r}{    } & \multicolumn{1}{r}{    } & \multicolumn{1}{r}{    } & \multicolumn{1}{r}{    } & \multicolumn{1}{r}{    } \\ 
   \midrule
        1  &           &    2 &  359 & 1140 &    2 &    1 &    0 &    2 &    8 &   70 &    0 \\ 
        2  &           &    0 &    3 &    0 &    0 &    1 &    0 & 1050 &    0 &    0 &    0 \\ 
        3  &           &    0 &    0 &    0 &    0 &    0 &    0 &    0 &   79 &   48 &    0 \\ 
        4  &           &   17 &    2 &    0 &    1 & 1113 &    1 &    2 &    0 &    0 &   13 \\ 
        5  &           &    1 &  513 &    3 &   30 &   15 &  232 &    1 &  143 &   28 &  276 \\ 
        6  &           &   48 &    0 &    0 &    0 &    0 &    0 &    0 &    0 &    1 &    0 \\ 
        7  &           &    0 &    0 &    1 &    0 &    0 &    0 &    0 &  609 &    3 &    0 \\ 
        8  &           &    2 &    0 &    0 &    2 &    4 &    7 &    0 &    0 &    0 &  682 \\ 
        9  &           &  251 &    0 &    0 &    0 &    0 &    0 &    0 &    0 &    1 &    0 \\ 
        10 &           &    0 &  158 &    0 &    3 &    0 &    0 &    0 &    1 &    0 &    1 \\ 
        11 &           &    0 &   31 &    0 & 1017 &    0 &   13 &    0 &    0 &    5 &    5 \\ 
        12 &           &  228 &    0 &    0 &    0 &    0 &    0 &    0 &    0 &    0 &    0 \\ 
        13 &           &    0 &    0 &    0 &    0 &    0 &    3 &    0 &    0 &  408 &    1 \\ 
        14 &           &    1 &    0 &    0 &    0 &    0 &    0 &    0 &    0 &  488 &    0 \\ 
        15 &           &   57 &    0 &    0 &    0 &    0 &    0 &    0 &    0 &    0 &    0 \\ 
        16 &           &    0 &    1 &    0 &    0 &    0 &    0 &    0 &  302 &    1 &    0 \\ 
        17 &           &    0 &    0 &    0 &    0 &    0 &    1 &    0 &    0 &    0 &   38 \\ 
        18 &           &    1 &    0 &    0 &    0 &    0 &    0 &    0 &    0 &    0 &   24 \\ 
        19 &           &    0 &    0 &    0 &    0 &   10 &  174 &    0 &    0 &    0 &   15 \\ 
        20 &           &  416 &    0 &    0 &    0 &    0 &    0 &    0 &    0 &    0 &    0 \\ 
        21 &           &    0 &    0 &    0 &    0 &    0 &  624 &    1 &    0 &    2 &    0 \\ 
        22 &           &    0 &   76 &    0 &    0 &    0 &    0 &    0 &    0 &    0 &    0 \\ 
        23 &           &  100 &    0 &    0 &    0 &    0 &    0 &    0 &    0 &    0 &    0 \\ 
        24 &           &   19 &    0 &    0 &    0 &    0 &    0 &    0 &    0 &    0 &    0 \\ 
   \bottomrule
\end{tabular}

}
\caption{The $K$-mH heatmap and the results by digit
  obtained from $K$-mH clustering of the Handwritten Digits dataset.}
\label{hwcodes}
\end{figure}
but performs very poorly ($\mR=0.097$) while
 the $K$-mH
heatmap~(Figure~\ref{hwcodes}) identifies a range of $K_*=19$ through
27, with a median of 24. The $K$-mH grouping for $K_*=24$
yielded moderately good performance~($\mR=0.64$). Interestingly, our
groups identified 2, 4 and 6 well, but not with a
simpler digit like 1, which, in the light of our findings in
Section~\ref{zip}, may suggest that the 16 attributes  
used to characterize the samples may have focused more on some features
of the handwriting of digits. 
\begin{table}[!h]																																				
\begin{center}																																							
\caption{Performance in terms of $\mathcal{R}$ (first row of
  each block) and estimated
  number of groups $\hat K$ (second row of each block) for all
  datasets used in the   experiment.  A ``-'' indicates 
  that the algorithm failed to converge or returned an error message.}
\label{tab_performance}				
\begin{tabular}{|c|c||c||c|c|c|c|c|c|}						
\hline
\multicolumn{2}{|c||}{Dataset}&\multirow{2}{*}{Measure}&\multicolumn{6}{|c|}{Method}\\
\cline{1-2}\cline{4-9}
Name&$(N,p,K)$&&FJ & CM & GSL-NN & DEMP & DEMP+ & $K$-mH\\
\hline
\hline
\multirow{2}{*}{Banana-Clump}  &\multirow{2}{*}{(200,2,2)} & $\mR$&1.0
              & 0.78&
                                                                   1.0
              & 0.77 & 1.0 & 1.0\\  \cline{3-9}
&&$\hat K$ & 2 & 3 & 2 & 3 & 2 & 2\\ \hline
\multirow{2}{*}{Bullseye}
                              &\multirow{2}{*}{(400,2,2)}&$\mR$&0.99&0.53&0.74&0.21&0.31&0.99\\
  \cline{3-9}
&&$\hat K$ & 2 & 5 & 2 & 7 & 6 & 2\\ \hline
\multirow{2}{*}{Banana-Spheres}
                              &\multirow{2}{*}{(3015,2,3)}&$\mR$&0.95&0.53&0.74&0.29&0.45&0.99\\
  \cline{3-9}
&&$\hat K$ & 5 & 11 & 2 & 18 & 13 & 3\\ \hline
\multirow{2}{*}{SCX}
                              &\multirow{2}{*}{(3420,2,8)}&$\mR$&0.89&0.78&0.53&0.77&0.78&1.0\\
\cline{3-9}
                              &&$\hat K$ & 8 & 12 & 7 & 12 & 12 & 8\\ \hline
 \multirow{2}{*}{Cigarette-Bullseye}
                              &\multirow{2}{*}{(3025,2,8)}&$\mR$&0.96&0.99&0.78&0.62&0.64&1.0\\
\cline{3-9}
                              &&$\hat K$ & 9 & 6 & 6 & 11 & 10 & 8\\ \hline
 \multirow{2}{*}{Olive Oils}
                              &\multirow{2}{*}{(572,8,9)}&$\mR$&0.54&0.75&0.61&0.82&0.85&0.67\\
\cline{3-9}
                              &&$\hat K$ & 8 & 7 & 9 & 7 & 7 & 11\\ \hline
  
 \multirow{2}{*}{Zipcode Digits}
                              &\multirow{2}{*}{(2000,256,10)}&$\mR$&0.05&-&0.64&-&-&0.54\\
\cline{3-9}
                              &&$\hat K$ & 8 & - & 9 & - & - & 26\\ \hline
  \multirow{2}{*}{Handwritten Digits}
                              &\multirow{2}{*}{(10992,16,10)}&$\mR$&0.10&-&-&-&-&0.64\\
\cline{3-9}
                              &&$\hat K$ & 10 & - & - & - & - & 24\\
  \hline\hline \hline
 \multicolumn{3}{|l|}{Number of cases where a competitor performs better} &
                                                                      6
                                                       & 8 & 7 & 8 & 6
                        & 2 \\ \hline
\end{tabular}
\end{center}
\end{table}

The performances of $K$-mH, FJ, CM, DEMP and DEMP+ for all cases are 
summarized  in Table~\ref{tab_performance} and shows
that $K$-mH is always among the top performers. This happens with very
complicated as well as simpler structures. Even when performance
is not outstanding, as happened with
higher-dimensional real-life datasets, $K$-mH is still a top
performer, often producing results that are interpretable. FJ is also a good
performer in the two-dimensional examples but this performance
degraded more in higher dimensions than with $K$-mH. DEMP ad DEMP+ was
a good performer only on the Olive Oils dataset where it performed
very well despite underestimating the number of groups by 2. Our
algorithm was also able to handle computations for
the larger handwritten digits dataset.

\section{Discussion}\label{dis}

In this paper we propose a new $K$-means hierarchical clustering
algorithm that builds on the idea of
\citet{fred05} that different
clusterings of a dataset each provide different discrete evidence of a
grouping. 
 We compare several different clusterings of the data and
choose the final grouping that is most similar to the  proposed
partitions. 
\begin{comment}
 $K$-means clustering is used as the initial step before
combining groups due to the computational efficiency of the $K$-means
algorithm of \citet{hartiganandwong79}.  Furthermore, the hierarchical
step of the algorithm is performed on a $K_0\times K_0$ dimensional
matrix of distances such that $K_0$ is smaller than $n$.  This
reduction in the dimension of the distance matrix in the hierarchical
step greatly reduces the computational complexity when compared to
performing the hierarchical clustering on an $n\times n$ matrix of
distances.  We performed this calculation on a well-separated dataset
containing 100,000 observations with a total computation time under 45
seconds for the $K$-means hierarchical algorithm.  This same dataset
was clustered using hierarchical clustering using only 10\% of the
observations and the computation time took over 30 minutes.  Both
cases produced a perfect partitioning of the observations but the
amount of computation time spent performing hierarchical clustering
was drastically more than the amount of time spent using the $K$-mean
hierarchical algorithm.  We  further clustered  
a  two-dimensional dataset generated to be well-separated with
1,000,000 observations. In this case $K$-mH took under 5 minutes to
find a perfect grouping.   

In Section \ref{examples} we note that the $K$-means hierarchical
\end{comment}
Our algorithm is among the top performing methods for both simulated
datasets with complicated shapes as well as several real datasets. We
also present an automated clustering approach for finding the optimal
parition and  number of groups that is shown to perform well.     In
addition, we use a graphical method introduced in \citet{fred05} that
we use to investigate uncertainty and structural stability of the
clustering and to determine the correct number of groups.
Our $K$-mH algorithm is computationally efficient for larger
datasets in comparison to several other cluster merging
algorithms. Indeed, the main computational cost is that of performing
$K$-means for different $K$, which can be expensive given the number
of initializing runs for each $K$. Further, it is very easily coded:
simple {\tt R} functions doing the same are available on request.

There are several  directions for future work. One possibility  is
to compare other distance measures in the hierarchical step of the
$K$-mH algorithm. It may be worthwhile to further use
other different distance measures as candidate partitions when
choosing the optimal partition $P_*$. Another aspect worthy of
investigation  would be to explore additional ways for determining
$K_*$. It is worth noting in this context that
the hierarchical map for visualizing structural stability can be a
memory-intensive operation.  Thus, we see that while we have put
forward a promising algorithm, issues meriting further attention remain.

\ack{ This research was supported, in part, by National Science
Foundation (NSF) grants DMS-0707069, DMS-CAREER-0437555 and by the
National Institutes of Health grant R21EB0126212. The content of this
paper however is solely the responsibility of the authors and does not
represent the official views of the NSF or the NIH.}

\begin{comment} 
\section{Supplementary Materials}

\begin{description}
 	\item[R-code:] R-code for the $K-mH$ algorithm is included.
\end{description}
%\n
\end{comment}
\bibliographystyle{wb_stat}
\bibliography{referencespaper3}

\begin{thebibliography}{36}
\newcommand{\enquote}[1]{`#1'}
\providecommand{\natexlab}[1]{#1}
\expandafter\ifx\csname urlstyle\endcsname\relax
  \providecommand{\doi}[1]{doi:\discretionary{}{}{}#1}\else
  \providecommand{\doi}{doi:\discretionary{}{}{}\begingroup
  \urlstyle{rm}\Url}\fi

\bibitem[{Alimoglu(1996)}]{alimoglu96}
Alimoglu, F (1996), \emph{Combining Multiple Classifiers for Pen-Based
  Handwritten Digit Recognition}, Master's thesis, Institute of Graduate
  Studies in Science and Engineering, Bogazici University.

\bibitem[{Alimoglu \& Alpaydin(1996)}]{alimogluandalpaydin96}
Alimoglu, F \& Alpaydin, E (1996), \enquote{Methods of combining multiple
  classifiers based on different representations for pen-based handwriting
  recognition,} in \emph{Proceedings of the Fifth Turkish Artificial
  Intelligence and Artificial Neural Networks Symposium (TAINN 96)}, Istanbul,
  Turkey.

\bibitem[{Baudry et~al.(2010)Baudry, Raftery, Celeux, Lo \&
  Gottardo}]{baudryetal08}
Baudry, JP, Raftery, AE, Celeux, G, Lo, K \& Gottardo, R (2010),
  \enquote{Combining mixture components for clustering.} \emph{Journal of
  Computational and Graphical Statistics}, \textbf{19}(2), pp. 332--353.

\bibitem[{Byers \& Raftery(1998)}]{byersandraftery98}
Byers, S \& Raftery, AE (1998), \enquote{Nearest neighbor clutter removal for
  estimating features in spatial point processes,} \emph{Journal of the
  American Statistical Association}, \textbf{93}, pp. 577--584.

\bibitem[{Everitt et~al.(2001)Everitt, Landau \& Leesem}]{everittetal01}
Everitt, BS, Landau, S \& Leesem, M (2001), \emph{Cluster Analysis (4th ed.)},
  Hodder Arnold, New York.

\bibitem[{Forina et~al.(1983)Forina, Armanino, Lanteri \&
  Tiscornia}]{forinaetal83}
Forina, M, Armanino, C, Lanteri, S \& Tiscornia, E (1983),
  \enquote{Classification of olive oils from their fatty acid composition,} in
  \emph{Food Research and Data Analysis}, Applied Science Publishers, London,
  p. 189–214.

\bibitem[{Forina \& Tiscornia(1982)}]{forinaandtiscornia82}
Forina, M \& Tiscornia, E (1982), \enquote{Pattern recognition methods in the
  prediction of italian olive oil origin by their fatty acid content,}
  \emph{Annali di Chimica}, \textbf{72}, p. 143–155.

\bibitem[{Fraley \& Raftery(2002)}]{fraleyandraftery02}
Fraley, C \& Raftery, AE (2002), \enquote{Model-based clustering, discriminant
  analysis, and density estimation,} \emph{Journal of the American Statistical
  Association}, \textbf{97}, pp. 611--631.

\bibitem[{Fred \& Jain(2005)}]{fred05}
Fred, AL \& Jain, AK (2005), \enquote{Combining multiple clusterings using
  evidence accumulation,} \emph{{IEEE} Transactions on Pattern Analysis and
  machine Intelligence}, \textbf{27}(6), pp. 835--850.

\bibitem[{Hartigan(1985)}]{hartigan85}
Hartigan, JA (1985), \enquote{Statistical theory in clustering,} \emph{Journal
  of Classification}, \textbf{2}, pp. 63--76.

\bibitem[{Hartigan \& Wong(1979)}]{hartiganandwong79}
Hartigan, JA \& Wong, MA (1979), \enquote{A $k$-means clustering algorithm,}
  \emph{Applied Statistics}, \textbf{28}, pp. 100--108.

\bibitem[{Hennig(2010)}]{hennig10}
Hennig, C (2010), \enquote{Methods for merging {G}aussian mixture components,}
  \emph{Advances in Data Analysis and Classification}, \textbf{4}(1), pp.
  3--34, \doi{10.1007/s11634-010-0058-3}.

\bibitem[{Hubert \& Arabie(1985)}]{hubertandarabie85}
Hubert, L \& Arabie, P (1985), \enquote{Comparing partitions,} \emph{Journal of
  Classification}, \textbf{2}, pp. 193--218.

\bibitem[{Johnson \& Wichern(2007)}]{johnston07}
Johnson, RA \& Wichern, DW (2007), \emph{Applied Multivate Statical Analysis},
  Prentice-Hall, 6 edn.

\bibitem[{Kaufman \& Rousseuw(1990)}]{kaufmanandrousseuw90}
Kaufman, L \& Rousseuw, PJ (1990), \emph{Finding Groups in Data}, John Wiley
  and Sons, Inc., New York.

\bibitem[{Kettenring(2006)}]{kettenring06}
Kettenring, JR (2006), \enquote{The practice of cluster analysis,}
  \emph{Journal of classification}, \textbf{23}, pp. 3--30.

\bibitem[{Krzanowski \& Lai(1988)}]{Krzanowski88}
Krzanowski, WJ \& Lai, YT (1988), \enquote{A criterion for determining the
  number of groups in a data set using sum-of-squares clustering,}
  \emph{Biometrics}, \textbf{44}, pp. 23--34.

\bibitem[{MacQueen(1967)}]{mac67}
MacQueen, J (1967), \enquote{Some methods of classification and analysis of
  multivariate observations,} \emph{Proceedings of the Fifth Berkeley Symposium
  on Mathematical Statistics and Probability}, pp. 281--297.

\bibitem[{Maitra(2009)}]{maitra07}
Maitra, R (2009), \enquote{Initializing partition-optimization algorithms,}
  \emph{{IEEE}/{ACM} Transactions on Computational Biology and Bioinformatics},
  \textbf{6}, pp. 144--157,
  \doi{http://doi.ieeecomputersociety.org/10.1109/TCBB.2007.70244}.

\bibitem[{Maitra et~al.(2012)Maitra, Melnykov \& Lahiri}]{maitraetal12}
Maitra, R, Melnykov, V \& Lahiri, S (2012), \enquote{Bootstrapping for
  significance of compact clusters in multi-dimensional datasets,}
  \emph{Journal of the American Statistical Association}, \textbf{107}(497),
  pp. 378--392, \doi{http://dx.doi.org/10.1080/01621459.2011.646935}.

\bibitem[{Maitra \& Ramler(2009)}]{maitraandramler09}
Maitra, R \& Ramler, IP (2009), \enquote{Clustering in the presence of
  scatter,} \emph{Biometrics}, \textbf{65}, pp. 341--352.

\bibitem[{Marriott(1971)}]{marriott71}
Marriott, FH (1971), \enquote{Practical problems in a method of cluster
  analysis,} \emph{Biometrics}, \textbf{27}, pp. 501--514.

\bibitem[{McLachlan(1987)}]{mclachlan87}
McLachlan, G (1987), \enquote{On bootstrapping the likelihood ratio test
  statistic for the number of components in a normal mixture,} \emph{Applied
  Statistics}, \textbf{36}, pp. 318--324.

\bibitem[{McLachlan \& Peel(2000)}]{mclachlanandpeel00}
McLachlan, G \& Peel, D (2000), \emph{Finite Mixture Models}, John Wiley and
  Sons, Inc., New York.

\bibitem[{McLachlan \& Basford(1988)}]{mclachlanandbasford88}
McLachlan, GJ \& Basford, KE (1988), \emph{Mixture Models: Inference and
  Applications to Clustering}, Marcel Dekker, New York.

\bibitem[{Melnykov(2016)}]{melnykov16}
Melnykov, V (2016), \enquote{Merging mixture components for clustering through
  pairwise overlap,} \emph{Journal of Computational and Graphical Statistics},
  \textbf{25}(1), pp. 66--90.

\bibitem[{Melnykov \& Maitra(2011)}]{melnykovandmaitra11}
Melnykov, V \& Maitra, R (2011), \enquote{{CARP}: Software for fishing out good
  clustering algorithms,} \emph{Journal of Machine Learning Research},
  \textbf{12}, pp. 69 -- 73.

\bibitem[{Muirhead(2005)}]{muirhead05}
Muirhead, R (2005), \emph{Aspects of Multivariate Statistical Theory}, Wiley, 2
  edn.

\bibitem[{Murtagh(1985)}]{murtagh85}
Murtagh, F (1985), \emph{Multi-dimensional clustering algorithms},
  Springer-Verlag, Berlin; New York.

\bibitem[{Newman et~al.(1998)Newman, Hettich, Blake \& Merz}]{newmanetal98}
Newman, D, Hettich, S, Blake, CL \& Merz, CJ (1998), \enquote{{UCI} repository
  of machine learning databases,} .

\bibitem[{{R Core Team}(2017)}]{R}
{R Core Team} (2017), \emph{R: A Language and Environment for Statistical
  Computing}, R Foundation for Statistical Computing, Vienna, Austria.

\bibitem[{Ramey(1985)}]{ramey85}
Ramey, DB (1985), \enquote{Nonparametric clustering techniques,} in
  \emph{Encyclopedia of Statistical Science}, Wiley, New York, vol.~6, pp.
  318--319.

\bibitem[{Stuetzle \& Nugent(2010)}]{nugent10}
Stuetzle, W \& Nugent, R (2010), \enquote{A generalized single linkage method
  for estimating the cluster tree of a density,} \emph{JCGS}, \textbf{19(2)},
  pp. 397--418.

\bibitem[{Sugar \& James(2003)}]{sugarandjames03}
Sugar, CA \& James, GM (2003), \enquote{Finding the number of clusters in a
  dataset,} \emph{Journal of the American Statistical Association},
  \textbf{98}(463).

\bibitem[{Tibshirani et~al.(2003)Tibshirani, Walther \&
  Hastie}]{tibshiranietal01}
Tibshirani, RJ, Walther, G \& Hastie, TJ (2003), \enquote{Estimating the number
  of clusters in a dataset via the gap statistic,} \emph{Journal of the Royal
  Statistical Society}, \textbf{63}(2), pp. 411--423.

\bibitem[{Tseng \& Wong(2005)}]{tsengandwong05}
Tseng, GC \& Wong, WH (2005), \enquote{Tight clustering: A resampling-based
  approach for identifying stable and tight patterns in data,}
  \emph{Biometrics}, \textbf{61}, pp. 10--16.

\end{thebibliography}

\end{document}